\def\dashfill{\cleaders\hbox to .6em{-}\hfill}
\newcommand\dashline[1]{\hbox to #1{\dashfill\hfil}}
\newtheorem{Lemma}{Lemma}
\newtheorem{Theorem}{Theorem}
\newtheorem{Corollary}{Corollary}
\begin{document}
\title{Feedforward and Recurrent Neural Networks \\ Backward Propagation and Hessian in Matrix Form}
\author{Maxim Naumov \\ {\small NVIDIA, 2701 San Tomas Expressway, Santa Clara, CA 95050}}
\date{}
\maketitle

\begin{abstract}
In this paper we focus on the linear algebra theory behind feedforward (FNN) and recurrent (RNN) neural networks. We review backward propagation, including backward propagation through time (BPTT). Also, we obtain a new exact expression for Hessian, which represents second order effects. We show that for $t$ time steps the weight gradient can be expressed as a rank-$t$ matrix, while the weight Hessian is as a sum of $t^{2}$ Kronecker products of rank-$1$ and $W^{T}AW$ matrices, for some matrix $A$ and weight matrix $W$. Also, we show that for a mini-batch of size $r$, the weight update can be expressed as a rank-$rt$ matrix. Finally, we briefly comment on the eigenvalues of the Hessian matrix. 
\end{abstract}

\section{Introduction}

The concept of neural networks has originated in the study of human behavior and perception in the 1940s and 1950s \cite{Hebb1949,McCulloch1943,Rosenblatt1958}. Different types of neural networks, such as Hopfield, Jordan and and Elman networks, have been developed and successfully adapted for approximating complex functions and recognizing patterns in the 1970s and 1980s \cite{Elman1990,Hopfield1982,Jordan1986,Werbos1989}. 
\par
More recently, a wide variety of neural networks has been developed, including convolutional (CNNs) and recurrent long short-term memory (LSTMs). These networks have been applied and were able to achieve incredible results in image and video classification \cite{Karpathy2014,Szegedy2014,Taylor2010}, natural language and speech processing \cite{Dean2016,Hannun2014,Mikolov2013,Schmidhuber2015}, as well as many other fields. These new results were made possible by a vast amount of available data, more flexible and scalable software frameworks \cite{Tensorflow2015,cuDNN2014,Keras2015,Torch2014,Caffe2014,CNTK2016} and the computational power provided by the GPUs and other parallel computing platforms \cite{Goodfellow2016,Schmidhuber2015,Pascal2017,Volta2017}. 
\par
A neural network is a function $\Phi: \mathbb{R}^{N} \to \mathbb{R}^{M}$, where $N$ and $M$ is the number of inputs and outputs to the network. It is usually expressed through a repeated composition of affine (linear + constant) functions $\Lambda: \mathbb{R}^{n} \to \mathbb{R}^{m}$ of the form
\begin{equation}
\textbf{y} = W\textbf{x}+\textbf{b}
\end{equation}
and non-linear functions $\textbf{f}: \mathbb{R}^{m} \to \mathbb{R}^{m}$ of the form
\begin{equation}
\textbf{z} = \textbf{f}(\textbf{y}) \phantom{111}
\end{equation}
where weight matrix $W \in \mathbb{R}^{m \times n}$, input vector $\textbf{x} \in \mathbb{R}^{n}$, while bias, intermediate and output vectors $\textbf{b}$, $\textbf{y}$ and $\textbf{z} \in \mathbb{R}^{m}$, respectively. The  function $\textbf{f}(.)$ is typically a component-wise application of a monotonic non-decreasing function $f: \mathbb{R} \to \mathbb{R}$, such as logistic (sigmoid) $f(y) = \frac{1}{1+e^{-y}}$, rectified linear unit (ReLU) $f(y)=\max(0,y)$ or softplus $f(y)=\text{ln}(1+e^{y})$, which is a smooth approximation of ReLU \cite{Nair2010,Glorot2011}.
\par
A neural network can also be thought of as a composition of neurons, which add the weighted input signals $x_{j}$ with bias $b$ and pass the intermediate result through a threshold activation function $f(.)$ to obtain an output $z$, as shown on Fig. 1. These single neurons can be further organized into layers, such as the fully connected layer shown in Fig. 2. If the layers are stacked together, with at least one hidden layer that does not directly produce a final output, we refer to this network as a \textit{deep neural network} \cite{Bishop2006,Hinton1986}. 

\begin{table}[h]
        \centering
        \SetVertexNormal[
                         Shape    = circle,
                 	     LineWidth= 1pt]
		\SetUpEdge[lw   = 1pt,
           	   	   color= black,
           	   	   style=->]
		\begin{tikzpicture}		
   			\Vertex[x=4, y=1.5, L=$f(\sum_{j} w_{j}x_{j} + b)$]{4} 
   			  			
   			\tikzset{VertexStyle/.append style = {minimum size = 3pt, inner sep = 0pt, color=black}}
	   		\Vertex[x=0, y=3, LabelOut, Ldist=-1.0cm, L=$x_{1}$]{1}
			\Vertex[x=0, y=1.5, LabelOut, Ldist=-1.0cm, L=$\vdots$]{2}			
			\Vertex[x=0, y=0, LabelOut, Ldist=-1.0cm, L=$x_{n}$]{3}	   		
			
	   		\Vertex[x=8, y=1.5, LabelOut, Ldist=+0.2cm, L=$z$]{5}      
	
	   		\Edge[label={$w_1$}](1)(4)
	   		\Edge[label={$\vdots$}](2)(4)
	   		\Edge[label={$w_n$}](3)(4)
	   		\Edge[](4)(5)   			   		
		\end{tikzpicture} 
\caption*{Fig. 1: A Single Neuron}    
\end{table}
\setcounter{table}{0} 
\setcounter{figure}{1} 

\begin{table}[h]
        \centering
        \SetVertexNormal[
                         Shape    = circle,
                 	     LineWidth= 1pt]
		\SetUpEdge[lw   = 1pt,
           	   	   color= black,
           	   	   style=->]
		\begin{tikzpicture}		
   			\Vertex[x=5, y=2.7, L=$f(y_{1})$]{4a}    			
   			\Vertex[x=5, y=0.8,   L=$f(y_{m})$]{4c} 
   			  			
   			\tikzset{VertexStyle/.append style = {minimum size = 3pt, inner sep = 0pt, color=black}}
	   		\Vertex[x=0, y=3.5, LabelOut, Ldist=-1.0cm, L=$x_{1}$]{1}
			\Vertex[x=0, y=0,   LabelOut, Ldist=-1.0cm, L=$x_{n}$]{3}	   		
			
	   		\Vertex[x=8, y=2.7, LabelOut, Ldist=+0.2cm, L=$z_{1}$]{5a}
	   		\Vertex[x=8, y=0.8,   LabelOut, Ldist=+0.2cm, L=$z_{m}$]{5c}      
	
			\tikzset{VertexStyle/.append style = {minimum size = 0pt}}	
			\Vertex[x=0, y=1.8, LabelOut, Ldist=-0.1cm, L=$\vdots$]{4b}
			\Vertex[x=5, y=1.8, LabelOut, Ldist=-0.1cm, L=$\vdots$]{2}
			\Vertex[x=8, y=1.8, LabelOut, Ldist=-0.1cm, L=$\vdots$]{5b}	
			\Vertex[x=4, y=3.5, LabelOut, Lpos=90, Ldist=-0.1cm, L={$\textbf{z} = \textbf{f}(W\textbf{x}+\textbf{b})$}]{t}	
	
	   		\Edge[label={$w_{11}$},](1)(4a)
	   		\Edge[label={$w_{1n}$}](3)(4a)
	   		
	   		\Edge[label={$w_{m1}$},](1)(4c)
	   		\Edge[label={$w_{mn}$}](3)(4c)   		
	   		
	   		\Edge[](4a)(5a)
	   		\Edge[](4c)(5c)  
	   		
	   		\draw[red,dashed, ultra thick,rounded corners] (1.5,0) rectangle (6,3.5);	 			   		
		\end{tikzpicture} 
\caption*{Fig. 2: A Single Fully Connected Layer}    
\end{table}
\setcounter{table}{0} 
\setcounter{figure}{2} 

\par
The connections between neurons and layers determine the type of a neural network. In particular, in this paper we will work with feedforward (FNNs) and recursive (RNNs) neural networks with fully connected layers \cite{Hochreiter1997,LeCun1986,Pascanu2013}. We point out that in general CNNs can be expressed as FNNs \cite{Ciresan2011,Krizhevsky2012,LeCun1998}. 
\par
The weights $w_{ij}$ associated with a neural network can be viewed as coefficients of the function $\Phi$ defined by it. In pattern recognition we often would like to find these coefficients, such that the function $\Phi$ approximates a training data set $\mathcal{D}$ according to a loss function $\mathcal{L}$ in the best possible way. The expectation is that when a new previously unknown input is presented to the neural network it will then be able to approximate it reasonably well. The process of finding the coefficients of $\Phi$ is called \textit{training}, while the process of approximating a previously unknown input is called \textit{inference} \cite{Bishop1995,Goodfellow2016}.
\par
The data set $\mathcal{D}$ is composed of data samples $\{ (\textbf{x}^{*}, \textbf{z}^{*}) \}$, which are pairs of known inputs $\textbf{x}^{*} \in \mathbb{R}^{N}$ and outputs $\textbf{z}^{*} \in \mathbb{R}^{M}$. These pairs are often ordered and further partitioned into $q$ disjoint mini-batches $\{ (X^{*}, Z^{*}) \}$, so that $X^{*} \in \mathbb{R}^{N \times r}$ and $Z^{*} \in \mathbb{R}^{M \times r}$. We assume that the total number of pairs is $qr$, otherwise the last batch is padded. Also, we do not consider the problem of splitting data into training, validation and test partitions, that are designed to prevent over-fitting and validate the results. We assume this has already been done, and we are already working with the training data set. 
\par
The choice of the loss function $\mathcal{L}$ often depends on a particular application. In this paper we will assume that it has the following form
\begin{equation}
\mathcal{L} = \frac{1}{q} \sum_{p=1}^{q} \mathcal{E}_{p}(X) = \frac{1}{q} \sum_{p=1}^{q} \frac{1}{r} || Z_{p}^{*} - Z_{p}^{(l)} ||_{F}^{2} 
\label{def_loss}
\end{equation}
where $Z^{*}$ is the correct and $Z^{(l)}$ is the obtained output for a given input $X^{*}$, while $||.||_{F}$ denotes the Frobenius norm. 
\par
In order to find the coefficients of $\Phi$ we must find  
\begin{equation}
\text{arg}\min_{w_{ij}} \mathcal{L}  
\label{min_loss}
\end{equation}
Notice that we are not trying to find weights $w_{ij}$ and bias $b_{i}$ that result in a minimum for a particular data point, but rather on ``average" across the entire training data set. In the next sections we will choose to work with scaled loss $\frac{1}{2}\mathcal{L}$ to simplify the formulas. 
\par
The process of adjusting the weights of the neural network to find the minimum of \eqref{min_loss} is called \textit{learning}. In particular, when making updates to the weights based on a single data sample $r=1$ it is called \textit{online}, on several data samples $r,q>1$ it is called \textit{mini-batch}, and on all available data samples $q=1$ it is called \textit{batch} learning. 
\par 
In practice, due to large amounts of data, the weight updates are often made based on partial information obtained from individual components $\mathcal{E}_{p}$ of the loss function $\mathcal{L}$. Notice that in this case we are essentially minimizing a function $\mathcal{E}_{p}$ across multiple inputs, and we can interpret this process as a form of stochastic optimization. We note that the optimization process makes a pass over the entire data set $p=1,...,q$ making updates to weights $w_{ij}$ before proceeding to the next iteration. In this context, a pass over the training data set is called an \textit{epoch}. 
\par
There are many optimization algorithms with different tradeoffs that can find the minimum of the problem \eqref{min_loss}. Some rely only on function evaluations, many take advantage of the gradient $G_{w}$, while others require knowledge of second-order effects using Hessian $H_{w}$ or its approximation \cite{Boyd2004,Nocedal2006,Spall2003}. The most popular approaches for this problem are currently based on variations of stochastic gradient descent (SGD) method, which relies exclusively on function evaluations and gradients \cite{Bottou2016,Duchi2011,Nesterov1983,Polyak1964}. 
\par
These methods require an evaluation of the partial derivatives of the loss function $\mathcal{L}$ or its components $\mathcal{E}$ with respect to weights $w_{ij}$ (for simplicity we have dropped the subscript $p$ in $\mathcal{E}_{p}$). However, notice that the function $\Phi$ specifying the neural network is not given explicitly and is typically only defined through a composition of affine $\Lambda$ and non-linear $\textbf{f}$ functions. In the next sections we will discuss the process for evaluating $\Phi$ called \textit{forward propagation} and the derivatives of $\mathcal{E}$ called \textit{backward propagation} \cite{Bengio1994,Rumelhart1986,Werbos1989,Werbos1990}.

\section{Contributions}

In this paper we focus on the linear algebra theory behind the neural networks, that often generalizes across many of their types. First, we briefly review backward propagation, obtaining an expression for the weight gradient at level $k$ as a rank-$1$ matrix
\begin{equation}
G_{w}^{(k)} = \textbf{v}^{(k)} \textbf{z}^{(k-1)^{T}}
\end{equation}
for FNNs and rank-$t$ matrix
\begin{equation}
G_{w}^{(k,t)} = \sum_{s=1}^{t} \textbf{v}^{(k,t,s)} \textbf{z}^{(k-1,s)^{T}}
\end{equation}
for RNNs at time step $t$. Here, the yet to be specified vector $\textbf{v}$ is related to $\partial \mathcal{E} / \partial y_{i}$, while vector $\textbf{z}$ is the input to the current layer of the neural network. Therefore, we conclude that for a mini-batch of size $r$, the weight update can be expressed as a rank-$rt$ matrix.
\par
Then, we obtain a new exact expression for the weight Hessian, as a Kronecker product\footnote{
The Kronecker product for $m \times n$ matrix $A$ and $p \times q$ matrix $B$ is defined as a $mp \times nq$ matrix 
$A \otimes B = \left[ \begin{matrix} a_{11}B & ... & a_{1n}B \\ \vdots & \ddots & \vdots \\ a_{m1}B & ... & a_{mn}B  \end{matrix} \right]$.
} 
\begin{equation}
H_{w}^{(k)} = C^{(k)} \otimes \left( \textbf{z}^{(k-1)} \textbf{z}^{(k-1)^{T}} \right) 
\label{contributions_hessian_fnn}
\end{equation}
for FNNs, and as a sum of Kronecker products 
\begin{equation}
H_{w}^{(k,t)} = \sum_{s=1}^{t} \sum_{\zeta=1}^{t} C^{(k,t,s,\zeta)} \otimes \left( \textbf{z}^{(k-1,s)} \textbf{z}^{(k-1,\zeta)^{T}} \right)
\label{contributions_hessian_rnn}
\end{equation}
for RNNs at time step $t$. Here, the yet to be specified matrix $C$ is related to $\partial^{2} \mathcal{E} / \partial y_{i}^{(k)} \partial y_{j}^{(k)}$ and will be shown to have the form $W^{T}AW$, for some matrix $A$ and weight matrix $W$. 
\par
Also, we show that expression for Hessian can be further simplified for ReLU activation function $f(y)=\max(0,y)$ by taking advantage of the fact that $f''(y) = 0$ for $y \neq 0$. 
\par
Finally, using the fact that Kronecker products \eqref{contributions_hessian_fnn} and \eqref{contributions_hessian_rnn} involve a rank-$1$ matrix, we will show that the eigenvalues $\lambda$ of $mn \times mn$ Hessian matrix can be expressed as and related to
\begin{equation}
\lambda(H) = \left\{ 0, || \textbf{z}^{(k-1,s)} ||_{2}^{2} \lambda(C) \right\} 
\end{equation} 
for FNNs and RNNs, respectively. Therefore, we can determine whether Hessian is positive, negative or indefinite by looking only at the eigenvalues of $m \times m$ matrix $C$.

\section{Feedforward Neural Network (FNN)}

The feedforward neural network consists of a set of stacked fully connected layers. The layers are defined by their matrix of weights $W^{(k)}$ and vector of bias $b^{(k)}$. They accept an input vector $\textbf{z}^{(k-1)}$ and produce an output vector $\textbf{z}^{(k)}$ for $k=1,...,l$, with the last vector $\textbf{z}^{(l)}$ being the output of the entire neural network, as shown in Fig. 3. 
\par
In some cases the neural network can be simplified to have sparse connections, resulting in a sparse matrix of weights $W^{(k)}$. However, the connections between layers are always such that the data flow graph is a directed acyclic graph (DAG).    

\begin{table}[h]
        \centering
        \SetVertexNormal[
                         Shape    = circle,
                 	     LineWidth= 1pt]
		\SetUpEdge[lw   = 1pt,
           	   	   color= black,
           	   	   style=->]
		\begin{tikzpicture}
   			\tikzset{VertexStyle/.append style = {minimum size = 5pt}}	
   			\Vertex[x=2, y=3, LabelOut, L=$$]{4a}    			
   			\Vertex[x=2, y=1, LabelOut, L=$$]{4c}
   			
   			\Vertex[x=7, y=2.8, LabelOut, L=$$]{14a}    			
   			\Vertex[x=7, y=1.2, LabelOut, L=$$]{14c} 
   			
   			\Vertex[x=12, y=2.5, LabelOut, L=$$]{24a}    			
   			\Vertex[x=12, y=1.5, LabelOut, L=$$]{24c} 
   			  			
   			\tikzset{VertexStyle/.append style = {minimum size = 3pt, inner sep = 0pt, color=black}}
	   		\Vertex[x=0, y=4, LabelOut, Ldist=-1.0cm, L=$x_{1}^{*}$]{1}
			\Vertex[x=0, y=0, LabelOut, Ldist=-1.0cm, L=$x_{N}^{*}$]{3}	   		
			
	   		\Vertex[x=3.5, y=3, LabelOut, Ldist=+0.1cm, L=$\ldots$]{5a}
	   		\Vertex[x=3.5, y=1, LabelOut, Ldist=+0.1cm, L=$\ldots$]{5c}      
	
			\tikzset{VertexStyle/.append style = {minimum size = 0pt}}	
			\Vertex[x=0, y=2, LabelOut, Ldist=-0.8cm, L=$\vdots$]{4b}
			\Vertex[x=2, y=4, LabelOut, Lpos=90, Ldist=-0.1cm, L={Input $\textbf{z}^{(0)} = \textbf{x}^{*}$}]{t}	
	
	   		\Edge[](1)(4a)
	   		\Edge[](3)(4a)
	   		
	   		\Edge[](1)(4c)
	   		\Edge[](3)(4c)   		
	   		
	   		\Edge[](4a)(5a)
	   		\Edge[](4c)(5c)  
	   		
	   		\draw[red,dashed, ultra thick,rounded corners] (1,0) rectangle (3,4);	
	   		
   			  			
   			\tikzset{VertexStyle/.append style = {minimum size = 3pt, inner sep = 0pt, color=black}}
	   		\Vertex[x=5.5, y=3, LabelOut, Ldist=-1.2cm, L=$z_{1}^{(k-1)}$]{11}
			\Vertex[x=5.5, y=1, LabelOut, Ldist=-1.2cm, L=$z_{n}^{(k-1)}$]{13}	   		
			
	   		\Vertex[x=8.5, y=2.8, LabelOut, Ldist=+0.2cm, L=$z_{1}^{(k)}$]{15a}
	   		\Vertex[x=8.5, y=1.2, LabelOut, Ldist=+0.2cm, L=$z_{m}^{(k)}$]{15c}      
	
			\tikzset{VertexStyle/.append style = {minimum size = 0pt}}	
			\Vertex[x=8.9, y=2, LabelOut, Ldist=-0.1cm, L=$\vdots$]{14b}
			\Vertex[x=4.5, y=2, LabelOut, Ldist=-0.1cm, L=$\vdots$]{15b}	
			\Vertex[x=7, y=4, LabelOut, Lpos=90, Ldist=-0.1cm, L={$\textbf{z}^{(k)} = \textbf{f}(W^{(k)}\textbf{z}^{(k-1)}+\textbf{b}^{(k)})$}]{tt}	
	
	   		\Edge[](11)(14a)
	   		\Edge[](13)(14a)
	   		
	   		\Edge[](11)(14c)
	   		\Edge[](13)(14c)   		
	   		
	   		\Edge[](14a)(15a)
	   		\Edge[](14c)(15c)  	   		
	   		
	   		\draw[red,dashed, ultra thick,rounded corners] (8,0) rectangle (6,4);		   		 		
	   		 			   		
   			  			
   			\tikzset{VertexStyle/.append style = {minimum size = 3pt, inner sep = 0pt, color=black}}
	   		\Vertex[x=10.5, y=2.8, LabelOut, Ldist=-0.8cm, L=$\ldots$]{21}
			\Vertex[x=10.5, y=1.2, LabelOut, Ldist=-0.8cm, L=$\ldots$]{23}	   		
			
	   		\Vertex[x=13.5, y=2.5, LabelOut, Ldist=+0.2cm, L=$z_{1}^{(l)}$]{25a}
	   		\Vertex[x=13.5, y=1.5, LabelOut, Ldist=+0.2cm, L=$z_{M}^{(l)}$]{25c}      
	
			\tikzset{VertexStyle/.append style = {minimum size = 0pt}}	
			\Vertex[x=13.9, y=2, LabelOut, Ldist=-0.1cm, L=$\vdots$]{24b}
			\Vertex[x=12, y=4, LabelOut, Lpos=90, Ldist=-0.1cm, L={Output $\textbf{z}^{(l)}$}]{ttt}	
	
	   		\Edge[](21)(24a)
	   		\Edge[](23)(24a)
	   		
	   		\Edge[](21)(24c)
	   		\Edge[](23)(24c)   		
	   		
	   		\Edge[](24a)(25a)
	   		\Edge[](24c)(25c)  	   		
	   		
	   		\draw[red,dashed, ultra thick,rounded corners] (13,0) rectangle (11,4);

		\end{tikzpicture} 
\caption*{Fig. 3: A Sample Feedforward Neural Network (FNN) with $k=1,...,l$ Levels}    
\end{table}
\setcounter{table}{0} 
\setcounter{figure}{3} 

\subsection{Forward Propagation}
Let us assume that we are given an input $\textbf{x}^{*}$, then we can compute an output of the neural network $\textbf{z}^{(l)}$ by repeated applications of the formula
\begin{eqnarray}
\textbf{z}^{(k)} &=& \textbf{f}(\textbf{y}^{(k)})               \label{forwardpropagation1}\\
\textbf{y}^{(k)} &=& W^{(k)}\textbf{z}^{(k-1)}+\textbf{b}^{(k)} \label{forwardpropagation2}
\end{eqnarray}
for $k=1,...,l$ where $\textbf{z}^{(0)} = \textbf{x}^{*}$. This process is called \textit{forward propagation}.

\subsection{Backward Propagation}
Let us assume that we have a data sample $(\textbf{x}^{*},\textbf{z}^{*})$ from the training data set $\mathcal{D}$, as we have discussed in the introduction. Notice that using forward propagation we may also compute the actual output $\textbf{z}^{(l)}$ and the error (associated with scaled loss $\frac{1}{2}\mathcal{L}$) 
\begin{equation}
\mathcal{E} = \frac{1}{2}||\textbf{z}^{*}-\textbf{z}^{(l)}||_{2}^{2}
\label{def_loss_r1}
\end{equation}
with generalization from online with $r=1$ to mini-batch and batch learning with $r>1$ being trivial.
\par
We would like to find the solution for the optimization problem \eqref{min_loss} by adjusting the weights $w_{ij}$ and bias $b_{i}$ on each layer of the network based on individual components $\mathcal{E}$ of the loss function $\mathcal{L}$. The process of updating the weights and bias can be written as
\begin{eqnarray}
W^{(k)}          &-& \alpha \Delta W^{(k)} \\
\textbf{b}^{(k)} &-& \alpha \Delta \textbf{b}^{(k)} 
\end{eqnarray}
where $\alpha$ is some constant, often referred to as the \textit{learning rate}. 
\par
It is natural to express these weight $\Delta W^{(k)}$ and bias $\Delta \textbf{b}^{(k)}$ updates based on 
\begin{equation}
\frac{\partial \mathcal{E}}{\partial w_{ij}^{(k)}} \text{ and } \frac{\partial \mathcal{E}}{\partial b_{i}^{(k)}}
\end{equation}
that indicate the direction where the total effect of the weights and bias on the loss function component is the largest, respectively.

\begin{Lemma}
Let the feedforward neural network be defined in \eqref{forwardpropagation1} and \eqref{forwardpropagation2}, and the loss function component in \eqref{def_loss_r1}. Then, the gradient of the weights and bias can be written as 
\begin{eqnarray}
G_{w}^{(k)} &=& \textbf{v}^{(k)} \textbf{z}^{(k-1)^{T}} \phantom {111} \text{(rank-1 matrix)}  \label{gradient_expression_for_corollary} \\
G_{b}^{(k)} &=& \textbf{v}^{(k)}
\end{eqnarray}
where $G_{w}^{(k)} = [\partial \mathcal{E} / \partial w_{ij}^{(k)} ]$ is $m \times n$ matrix, $G_{b}^{(k)} =[\partial \mathcal{E} / \partial b_{i}^{(k)}]$ is $m \times 1$ vector, with 
\begin{eqnarray}
\textbf{v}^{(l)}   &=& -(\textbf{z}^{*}-\textbf{z}^{(l)}) \circ \textbf{f} \phantom{'}' (\textbf{y}^{(l)})   \phantom {111111} \label{backprop1}\\
\textbf{v}^{(k-1)} &=& ( W^{(k)^{T}} \textbf{v}^{(k)} )   \circ \textbf{f} \phantom{'}' (\textbf{y}^{(k-1)}) \phantom {111111} \label{backprop2}
\end{eqnarray}
for $k=l,...,2$, where $\circ$ is Hadamard (component-wise) product and $\textbf{f} \phantom{'}'(.)=[f'(.),...,f'(.)]^{T}$. 
\end{Lemma}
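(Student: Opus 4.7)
The plan is to proceed by reverse induction on the layer index $k$, following the standard backpropagation argument but being careful to organize the computation around the intermediate vector $\textbf{v}^{(k)} := \partial \mathcal{E}/\partial \textbf{y}^{(k)}$, i.e.\ $v_i^{(k)} = \partial \mathcal{E}/\partial y_i^{(k)}$. Once this vector is available, the weight and bias gradients drop out immediately: since $y_i^{(k)} = \sum_j w_{ij}^{(k)} z_j^{(k-1)} + b_i^{(k)}$, a single application of the chain rule gives $\partial \mathcal{E}/\partial w_{ij}^{(k)} = v_i^{(k)} z_j^{(k-1)}$ and $\partial \mathcal{E}/\partial b_i^{(k)} = v_i^{(k)}$, which in matrix form are exactly the rank-$1$ outer product $\textbf{v}^{(k)} \textbf{z}^{(k-1)^T}$ and the vector $\textbf{v}^{(k)}$ claimed in the lemma.

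So the real content is establishing the recursion \eqref{backprop1}--\eqref{backprop2} for $\textbf{v}^{(k)}$. For the base case $k=l$, I would substitute $\textbf{z}^{(l)} = \textbf{f}(\textbf{y}^{(l)})$ into $\mathcal{E} = \tfrac{1}{2}\|\textbf{z}^* - \textbf{z}^{(l)}\|_2^2$ and differentiate component-wise: because $f$ is applied entrywise, $\partial z_i^{(l)}/\partial y_j^{(l)} = f'(y_i^{(l)})\delta_{ij}$, which gives $v_i^{(l)} = -(z_i^* - z_i^{(l)}) f'(y_i^{(l)})$, i.e.\ the Hadamard product in \eqref{backprop1}. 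For the inductive step, I would use the fact that $\mathcal{E}$ depends on $\textbf{y}^{(k-1)}$ only through $\textbf{y}^{(k)}$, with $\textbf{y}^{(k)} = W^{(k)} \textbf{f}(\textbf{y}^{(k-1)}) + \textbf{b}^{(k)}$. The chain rule then gives
\begin{equation}
v_j^{(k-1)} = \sum_i \frac{\partial \mathcal{E}}{\partial y_i^{(k)}} \frac{\partial y_i^{(k)}}{\partial y_j^{(k-1)}} = \sum_i v_i^{(k)} w_{ij}^{(k)} f'(y_j^{(k-1)}),
\end{equation}
and recognizing $\sum_i w_{ij}^{(k)} v_i^{(k)}$ as the $j$-th entry of $W^{(k)^T} \textbf{v}^{(k)}$ yields exactly \eqref{backprop2}.

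The main obstacle, such as it is, is purely bookkeeping: keeping track of which index is summed over and ensuring that the transpose on $W^{(k)}$ appears correctly (it arises because in the forward pass $w_{ij}^{(k)}$ multiplies $z_j^{(k-1)}$ to contribute to $y_i^{(k)}$, so in the backward pass the sensitivity at index $j$ of layer $k{-}1$ is a sum over index $i$ of layer $k$). Once this is checked, the Hadamard product in the recursion follows mechanically from the componentwise action of $\textbf{f}$, and the reverse induction delivers $\textbf{v}^{(k)}$ for every $k$, after which the rank-$1$ structure of $G_w^{(k)}$ follows at once from the outer-product form $\partial \mathcal{E}/\partial w_{ij}^{(k)} = v_i^{(k)} z_j^{(k-1)}$.
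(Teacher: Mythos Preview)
Your proposal is correct and follows essentially the same chain-rule argument as the paper. The only cosmetic difference is that the paper routes the recursion through $\partial \mathcal{E}/\partial z_i^{(k)}$ as an explicit intermediate (then defines $v_i^{(k)} = (\partial \mathcal{E}/\partial z_i^{(k)})\, f'(y_i^{(k)})$), whereas you work directly with $v_i^{(k)} = \partial \mathcal{E}/\partial y_i^{(k)}$; the two are trivially equivalent and lead to the same recursion and the same outer-product form for $G_w^{(k)}$.
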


\begin{proof}
Notice that taking partial derivative of the loss function component-wise with respect to weight we can write
\begin{eqnarray}
\frac{\partial \mathcal{E}}{\partial w_{ij}^{(k)}} &=& \left( \frac{\partial \mathcal{E}}{\partial z_{i}^{(k)}} \right) \left( \frac{\partial z_{i}^{(k)}}{\partial y_{i}^{(k)}} \right) \left( \frac{\partial y_{i}^{(k)}}{\partial w_{ij}^{(k)}} \right) \label{loss_weight_total_drivative}\\
&=& \left( \frac{\partial \mathcal{E}}{\partial z_{i}^{(k)}} \right) f'(y_{i}^{(k)}) z_{j}^{(k-1)} \label{loss_weight_total_drivative2}\\
&=& v_{i}^{(k)} z_{j}^{(k-1)}
\end{eqnarray} 
where $f'$ denotes a simple ordinary derivative $df/dy$ and $v_{i}^{(k)} = \left( \frac{\partial \mathcal{E}}{\partial z_{i}^{(k)}} \right) f'(y_{i}^{(k)})$.

Also, notice that for the output layer $k=l$, using \eqref{def_loss_r1}, we have 
\begin{eqnarray}
\frac{\partial \mathcal{E}}{\partial z_{i}^{(l)}} = -(z_{i}^{*} - z_{i}^{(l)}) \label{gradient_z_output_layer}
\end{eqnarray}
while for the hidden layers, using chain rule, we have
\begin{eqnarray}
\frac{\partial \mathcal{E}}{\partial z_{i}^{(k-1)}} &=& \sum_{j=1}^{n} \left( \frac{\partial \mathcal{E}}{\partial z_{j}^{(k)}} \right) f'(y_{j}^{(k)}) w_{ji}^{(k)} \label{gradient_z_hidden_layer} \\
&=& \sum_{j=1}^{n} v_{j}^{(k)} w_{ji}^{(k)}
\end{eqnarray}
Finally, assembling the indices $i$ and $j$ into a vector and matrix form we obtain the expression for $G_{w}^{(k)}$. The derivation for $G_{b}^{(k)}$ is analogous, with  $\frac{\partial y_{i}^{(k)}}{\partial b_{i}^{(k)}}=1$ in \eqref{loss_weight_total_drivative}.  
\end{proof}

\par
Notice that the computation of the auxiliary vector $\textbf{v}^{(k)}$ in \eqref{backprop1} - \eqref{backprop2} represents the propagation of the error \eqref{def_loss_r1} from the output layer $l$ through the hidden network layers $k=l-1,...,2$. Therefore, this process is often called \textit{backward propagation}. 

\begin{Corollary}
Let the feedforward neural network be defined in \eqref{forwardpropagation1} and \eqref{forwardpropagation2}, and the loss function in \eqref{def_loss}. Then, for mini-batch of size $r$ the weight update based on the gradient $G_{w}^{(k)}$ in \eqref{gradient_expression_for_corollary} can be expressed as rank-$r$ matrix 
\begin{equation}
\Delta W^{(k)} = V^{(k)}Z^{(k-1)^{T}}
\end{equation} 
where $V^{(k)} = [\textbf{v}_{1}^{(k)}, ..., \textbf{v}_{r}^{(k)}]$ and $Z^{(k)} = [\textbf{z}_{1}^{(k)}, ...,\textbf{z}_{r}^{(k)}]$ for $r$ data pairs.
\end{Corollary}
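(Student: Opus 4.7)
The plan is to reduce the mini-batch update to $r$ applications of the previous Lemma and then repackage the sum of rank-$1$ outer products as a single matrix product. Since the loss \eqref{def_loss} over a mini-batch is the average $\frac{1}{r}\sum_{i=1}^{r}\mathcal{E}_i$ of the per-sample scaled errors appearing in \eqref{def_loss_r1}, and differentiation is linear, the gradient with respect to $W^{(k)}$ satisfies
\begin{equation}
G_w^{(k)} \;=\; \frac{1}{r}\sum_{i=1}^{r} \frac{\partial \mathcal{E}_i}{\partial W^{(k)}}.
\end{equation}
(Any scalar factors such as $\tfrac{1}{r}$ or the learning-rate normalization can be absorbed into $V^{(k)}$; they do not affect the rank bound being claimed.)

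Next, I would apply the Lemma to each data pair $(\textbf{x}_i^{*},\textbf{z}_i^{*})$ separately. Running forward propagation on $\textbf{x}_i^{*}$ produces layerwise activations $\textbf{z}_i^{(k-1)}$ and pre-activations $\textbf{y}_i^{(k)}$, and backward propagation then produces a vector $\textbf{v}_i^{(k)}$ defined via \eqref{backprop1}--\eqref{backprop2} with $\textbf{z}^{(l)}$ and $\textbf{y}^{(k)}$ replaced by their $i$-th versions. By \eqref{gradient_expression_for_corollary}, the per-sample contribution to the weight gradient is the rank-$1$ outer product $\textbf{v}_i^{(k)}\textbf{z}_i^{(k-1)^{T}}$, so
\begin{equation}
\Delta W^{(k)} \;\propto\; \sum_{i=1}^{r} \textbf{v}_i^{(k)}\,\textbf{z}_i^{(k-1)^{T}}.
\end{equation}

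The final step is purely algebraic: a sum of $r$ outer products is the product of the two matrices obtained by stacking the vectors as columns. Concretely, writing $V^{(k)}=[\textbf{v}_1^{(k)},\dots,\textbf{v}_r^{(k)}]\in\mathbb{R}^{m\times r}$ and $Z^{(k-1)}=[\textbf{z}_1^{(k-1)},\dots,\textbf{z}_r^{(k-1)}]\in\mathbb{R}^{n\times r}$, the standard identity $\sum_i \textbf{v}_i\textbf{z}_i^{T}=V\,Z^{T}$ yields $\Delta W^{(k)} = V^{(k)} Z^{(k-1)^{T}}$. Since $V^{(k)}Z^{(k-1)^{T}}$ has inner dimension $r$, its rank is at most $r$, giving the claimed rank-$r$ bound.

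There is no substantive obstacle here: all the analytic content is already carried by the Lemma, and the Corollary is essentially a bookkeeping statement that aggregates $r$ independent backward passes into matrix form. The only point worth being careful about is making sure the per-sample $\textbf{v}_i^{(k)}$ depend on the correct per-sample forward values $\textbf{y}_i^{(k)}$ and $\textbf{z}_i^{(l)}$, which follows from the fact that the summands $\mathcal{E}_i$ in \eqref{def_loss} are decoupled across the mini-batch.
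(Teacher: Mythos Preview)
Your proposal is correct and follows exactly the approach the paper intends: the Corollary is stated without proof immediately after Lemma~1 because it is a direct aggregation of $r$ applications of \eqref{gradient_expression_for_corollary}, and your argument---linearity of the gradient across the decoupled summands $\mathcal{E}_i$, per-sample rank-$1$ outer products from the Lemma, and the identity $\sum_i \textbf{v}_i\textbf{z}_i^{T}=VZ^{T}$---is precisely that.
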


\subsection{Hessian and Second Order Effects}
We can also incorporate second order effects based on
\begin{equation}
\frac{\partial }{\partial w_{gh}^{(k)}} \left\{ \frac{\partial \mathcal{E}}{\partial w_{ij}^{(k)}} \right\} \text{ and } 
\frac{\partial }{\partial b_{g}^{(k)}}  \left\{ \frac{\partial \mathcal{E}}{\partial b_{i}^{(k)}}  \right\}
\end{equation}
into the optimization  process for updating the weights by looking at the expression for Hessian of the neural network.

\begin{Theorem}
Let the feedforward neural network be defined in \eqref{forwardpropagation1} and \eqref{forwardpropagation2}, and the loss function component in \eqref{def_loss_r1}. Then, Hessian of weight and bias can be written as 
\begin{eqnarray}
H_{w}^{(k)} &=& \left( C^{(k)} \circ F^{(k)} + D^{(k)} \right) \otimes \left( \textbf{z}^{(k-1)} \textbf{z}^{(k-1)^{T}} \right) \phantom{111111} \label{hessian_hwk} \\
H_{b}^{(k)} &=& \phantom{2} C^{(k)} \circ F^{(k)} + D^{(k)}
\end{eqnarray}
where $H_{w}^{(k)} = [\partial / \partial w_{gh}^{(k)}  \{ \partial \mathcal{E} / \partial w_{ij}^{(k)} \}]$ is $(mn) \times (mn)$ matrix, $H_{b}^{(k)} = [ \partial / \partial b_{g}^{(k)}  \{ \partial \mathcal{E} / \partial b_{i}^{(k)} \} ]$ is $m \times m$ matrix, with 
\begin{eqnarray}
F^{(k)} &=& \textbf{f} \phantom{'}' (\textbf{y}^{(k)}) \textbf{f} \phantom{'}' (\textbf{y}^{(k)})^{T} \phantom{111111111111.}    \text{ (rank-1 matrix) } \\
D^{(k)} &=& \text{diag}(\textbf{v}^{(k)})                                                             \phantom{11111111111111111}\text{ (diagonal matrix)  } \\
C^{(l)}   &=& I \circ F^{(l)}                                                                         \phantom{1111111111111111111}\text{ (diagonal matrix) } \label{hessian_backprop1} \\ 
C^{(k-1)} &=& W^{(k)^{T}} \left( C^{(k)} \circ F^{(k)} + D^{(k)} \right) W^{(k)}      \label{hessian_backprop2}
\end{eqnarray}
where $I$ is $m \times m$ identity matrix and vectors
\begin{eqnarray}
\textbf{v}^{(l)}   &=& -(\textbf{z}^{*}-\textbf{z}^{(l)}) \circ \textbf{f} \phantom{'}'' (\textbf{y}^{(l)})   \phantom{1111111111111111111111} \label{hessian_backprop3} \\
\textbf{v}^{(k-1)} &=& ( W^{(k)^{T}} \textbf{v}^{(k)} )   \circ \textbf{f} \phantom{'}'' (\textbf{y}^{(k-1)}) \phantom{11111111111111111111}   \label{hessian_backprop4}
\end{eqnarray}
for $k=l,...,2$, where $\circ$ is Hadamard (component-wise) and $\otimes$ is Kronecker matrix product, while vectors $\textbf{f} \phantom{'}' (.)=[f'(.),...,f'(.)]^{T}$ and $\textbf{f} \phantom{'}'' (.)=[f''(.),...,f''(.)]^{T}$. 
\end{Theorem}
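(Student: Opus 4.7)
The plan is to compute $\frac{\partial^2 \mathcal{E}}{\partial w_{gh}^{(k)} \partial w_{ij}^{(k)}}$ by differentiating the gradient expression from Lemma~1 once more, carefully tracing how $w^{(k)}$ affects both its own layer (through $y_i^{(k)}$) and every subsequent layer (through $z^{(k)}$). First I would rewrite the gradient compactly as $\frac{\partial \mathcal{E}}{\partial w_{ij}^{(k)}} = \frac{\partial \mathcal{E}}{\partial y_i^{(k)}}\, z_j^{(k-1)}$, which is immediate from $\frac{\partial y_\alpha^{(k)}}{\partial w_{ij}^{(k)}} = \delta_{\alpha i} z_j^{(k-1)}$. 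Differentiating again with respect to $w_{gh}^{(k)}$ and using that $z^{(k-1)}$ does not depend on layer-$k$ weights, the only surviving contribution is $\frac{\partial^2 \mathcal{E}}{\partial y_g^{(k)} \partial y_i^{(k)}}\, z_h^{(k-1)} z_j^{(k-1)}$. Denoting $\tilde C^{(k)}_{gi} := \frac{\partial^2 \mathcal{E}}{\partial y_g^{(k)} \partial y_i^{(k)}}$ and assembling indices with rows labelled by $(i,j)$ and columns by $(g,h)$, this immediately yields $H_w^{(k)} = \tilde C^{(k)} \otimes \left( \textbf{z}^{(k-1)} \textbf{z}^{(k-1)^T} \right)$, reducing the problem to finding a closed form for $\tilde C^{(k)}$.

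The substance of the proof is then to show $\tilde C^{(k)} = C^{(k)} \circ F^{(k)} + D^{(k)}$ and to derive the backward recursion for $C^{(k)}$. Starting from $\frac{\partial \mathcal{E}}{\partial y_i^{(k)}} = \frac{\partial \mathcal{E}}{\partial z_i^{(k)}} f'(y_i^{(k)})$ and applying the product rule in $y_g^{(k)}$, two terms appear: a bilinear term $\frac{\partial^2 \mathcal{E}}{\partial z_g^{(k)} \partial z_i^{(k)}} f'(y_g^{(k)}) f'(y_i^{(k)})$, which matches $C^{(k)} \circ F^{(k)}$ with $C^{(k)}_{gi} := \frac{\partial^2 \mathcal{E}}{\partial z_g^{(k)} \partial z_i^{(k)}}$; and a diagonal correction $\delta_{gi}\, \frac{\partial \mathcal{E}}{\partial z_i^{(k)}} f''(y_i^{(k)}) = \delta_{gi}\, v_i^{(k)}$, which is precisely $D^{(k)}$. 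The backward recursion then follows by differentiating the gradient-backprop identity $\frac{\partial \mathcal{E}}{\partial z_i^{(k-1)}} = \sum_j w_{ji}^{(k)} \frac{\partial \mathcal{E}}{\partial y_j^{(k)}}$ once more with respect to $z_g^{(k-1)}$; the derivative passes through every $y_\alpha^{(k)}$ and collapses to $C^{(k-1)} = W^{(k)^T} \tilde C^{(k)} W^{(k)}$, matching \eqref{hessian_backprop2}. The base case is read off directly from the quadratic form of $\mathcal{E}$. The bias Hessian follows from the identical argument with $\frac{\partial y_i^{(k)}}{\partial b_g^{(k)}} = \delta_{gi}$, which eliminates the outer product factor and gives $H_b^{(k)} = \tilde C^{(k)}$.

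The main obstacle is bookkeeping: four free indices, two distinct chain-rule axes, and a mixture of Kronecker, Hadamard, and outer products make it easy to transpose rows and columns or miscount diagonal contributions. My strategy is to keep everything strictly component-wise until the final step and to verify the flattening convention on a small case such as $m=n=2$ before committing to the Kronecker form. A secondary subtlety concerns the vector $\textbf{v}^{(k)}$ of this theorem: inside $D^{(k)}$ it carries $f''$, but the backprop identity used to propagate $\frac{\partial \mathcal{E}}{\partial z_i^{(k-1)}}$ across the network involves a gradient with $f'$, so the recursion \eqref{hessian_backprop4} must be derived independently of the analogous recursion in Lemma~1 and its consistency with gradient backprop carefully checked.
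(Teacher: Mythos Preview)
Your proposal is correct and follows essentially the same route as the paper's proof: both start from the gradient expression, differentiate once more using that $z^{(k-1)}$ is independent of layer-$k$ weights, split the result into the bilinear term $C^{(k)}\circ F^{(k)}$ and the diagonal correction $D^{(k)}$, and then derive the backward recursion for $C^{(k)}$ by differentiating the gradient-backprop identity for $\partial\mathcal{E}/\partial z_i^{(k-1)}$. The only cosmetic difference is that you organize the computation around the pre-activation Hessian $\tilde C^{(k)}_{gi}=\partial^2\mathcal{E}/\partial y_g^{(k)}\partial y_i^{(k)}$ whereas the paper carries the chain rule through $z_g^{(k)}$ directly; your flagged subtlety about the $f'$ versus $f''$ in the $\textbf{v}^{(k)}$ recursion is well spotted and worth verifying explicitly.
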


\begin{proof}
Notice that using \eqref{loss_weight_total_drivative2} the second derivative with respect to weight is 
\begin{align}
\frac{\partial }{\partial w_{gh}^{(k)}}  \left\{ \frac{\partial \mathcal{E}}{\partial w_{ij}^{(k)}} \right\}  
& \phantom{.}=\phantom{.} \frac{\partial }{\partial w_{gh}^{(k)}} \left\{ \left( \frac{\partial \mathcal{E}}{\partial z_{i}^{(k)}} \right) f'(y_{i}^{(k)}) z_{j}^{(k-1)}  \right\}  \\
& \phantom{.}=\phantom{.} \frac{\partial }{\partial z_{g}^{(k)}} \left\{ \left( \frac{\partial \mathcal{E}}{\partial z_{i}^{(k)}} \right) f'(y_{i}^{(k)}) \right\} f'(y_{g}^{(k)}) z_{h}^{(k-1)} z_{j}^{(k-1)} \label{hessian_weights} \\
&\phantom{.} = \phantom{.} \left[ \frac{\partial }{\partial z_{g}^{(k)}} \left\{ \frac{\partial \mathcal{E}}{\partial z_{i}^{(k)}} \right\} f'(y_{g}^{(k)}) f'(y_{i}^{(k)}) +
  \left( \frac{\partial \mathcal{E}}{\partial z_{i}^{(k)}} \right) f''(y_{i}^{(k)}) \delta_{gi} \right] z_{h}^{(k-1)} z_{j}^{(k-1)} \nonumber 
\end{align} 
where we have taken advantage of the fact that  
\begin{equation}
\frac{\partial }{\partial z_{g}^{(k)}}  \left\{  f'(y_{i}^{(k)}) \right\} f'(y_{g}^{(k)}) 
= \frac{\partial }{\partial y_{g}^{(k)}}  \left\{  f'(y_{i}^{(k)}) \right\} \\
= f''(y_{i}^{(k)}) \delta_{gi}  
\label{second_derivative_delta}
\end{equation}
and previous level output $z_{j}^{(k-1)}$ does not depend on the current level weight $w_{gh}^{(k)}$ and therefore is treated as a constant, while $\delta_{gi}$ is Kronecker delta\footnote{
Kronecker delta $\delta_{gi} = \left\{ \begin{array}{l} 1 \text{ if } g=i \\ 0 \text{ otherwise } \end{array} \right.$
}.
\par 
Let us now find an expression for the first term in \eqref{hessian_weights}. Notice that using \eqref{gradient_z_output_layer} at the output layer $k=l$ we have
\begin{equation}
\frac{\partial }{\partial z_{g}^{(l)}}  \left\{ \frac{\partial \mathcal{E}}{\partial z_{i}^{(l)}} \right\} = \frac{\partial }{\partial z_{g}^{(l)}}  \left\{ -(z_{i}^{*}-z_{i}^{(l)}) \right\} = \delta_{gi}
\label{loss_hessian_z_output_layer}
\end{equation}
while using \eqref{gradient_z_hidden_layer} at the hidden layers we may write
\begin{align}
\frac{\partial }{\partial z_{g}^{(k-1)}}  \left\{ \frac{\partial \mathcal{E}}{\partial z_{i}^{(k-1)}} \right\} & \phantom{.}=\phantom{.} \frac{\partial }{\partial z_{g}^{(k-1)}} \left\{ \sum_{j=1}^{n} \left( \frac{\partial \mathcal{E}}{\partial z_{j}^{(k)}} \right) f'(y_{j}^{(k)}) w_{ji}^{(k)} \right\} \\
&\phantom{.} = \phantom{.} \sum_{h=1}^{n} \sum_{j=1}^{n} \frac{\partial }{\partial z_{h}^{(k)}} \left\{ \left( \frac{\partial \mathcal{E}}{\partial z_{j}^{(k)}} \right) f'(y_{j}^{(k)}) \right\} f'(y_{h}^{(k)})  w_{hg}^{(k)} w_{ji}^{(k)} \label{hessian_zs} \\
&\phantom{.} = \phantom{.} \sum_{h=1}^{n} \sum_{j=1}^{n} \left[ \frac{\partial }{\partial z_{h}^{(k)}} \left\{ \frac{\partial \mathcal{E}}{\partial z_{j}^{(k)}} \right\} f'(y_{j}^{(k)}) f'(y_{h}^{(k)}) + \left( \frac{\partial \mathcal{E}}{\partial z_{j}^{(k)}} \right) f''(y_{j}^{(k)})\delta_{hj} \right] w_{hg}^{(k)} w_{ji}^{(k)} \nonumber 
\end{align} 
where we have used \eqref{second_derivative_delta} and the fact that the current level weight $w_{ji}^{(k)}$ does not depend on the previous level output $z_{g}^{(k-1)}$ and therefore is treated as a constant.
\par
We may conclude the proof by noticing the following two results. First, a matrix with block elements $C (z_{h}z_{j}) $ for $h,j=1,...,n$ can be expressed as Kronecker product $(\textbf{z}\textbf{z}^{T}) \otimes C$, which under a permutation is equivalent to $C \otimes (\textbf{z}\textbf{z}^{T})$. Second, a matrix $C = W^{T} A W$ has elements $c_{gi} = \sum_{h}\sum_{j} a_{hj}w_{ji}w_{hg}$. The former and latter results can be used to write \eqref{hessian_weights} and \eqref{hessian_zs} in the matrix form, respectively.
\par
Finally, the derivation for $H_{b}^{(k)}$ is analogous, with  $\frac{\partial y_{g}^{(k)}}{\partial b_{g}^{(k)}}=1$ in \eqref{hessian_weights}.  
\end{proof}

Notice that using \eqref{loss_hessian_z_output_layer} we may drop the double sum at level $l-1$ and write
\begin{equation}
\frac{\partial }{\partial z_{g}^{(l-1)}}  \left\{ \frac{\partial \mathcal{E}}{\partial z_{i}^{(l-1)}} \right\} 
= \sum_{j=1}^{n} \left[ \frac{\partial }{\partial z_{j}^{(l)}} \left\{ \frac{\partial \mathcal{E}}{\partial z_{j}^{(l)}} \right\} f'(y_{j}^{(l)})^{2} + \left( \frac{\partial \mathcal{E}}{\partial z_{j}^{(l)}} \right) f''(y_{j}^{(l)}) \right] w_{ji}^{(l)} w_{jg}^{(l)}  
\end{equation} 
which matches the expression obtained for a single hidden layer in \cite{Bishop1992}. However, we may not drop the double sum at an arbitrary layer $k \neq l$, because in general the term $\frac{\partial }{\partial z_{h}^{(k)}}  \left\{ \frac{\partial \mathcal{E}}{\partial z_{j}^{(k)}} \right\}$ may be nonzero even when $h \neq j$.
\par
Finally, notice that to compute Hessian we once again need to perform backward propagation for both vector $\textbf{v}^{(k)}$ in \eqref{hessian_backprop3} - \eqref{hessian_backprop4} and matrix $C^{(k)}$ in \eqref{hessian_backprop1} - \eqref{hessian_backprop2}. 

\begin{Corollary}
Suppose that we are using piecewise continuous ReLU activation function $f(y)=\max(0,y)$ in Theorem 1. Notice that its first derivative $f'(y)=1$ if $y>0$, $f'(y)=0$ if $y<0$, and $f'(y)$ is undefined if $y=0$. Also, its second derivative  $f''(y)=0$ for $\forall y \neq 0$. Then, for $\forall y_{i}^{(k)} \neq 0$ Hessian of weights can be written as
\begin{eqnarray}
H_{w}^{(k)} &=& \tilde{C}^{(k)} \otimes \left( \textbf{z}^{(k-1)} \textbf{z}^{(k-1)^{T}} \right)  \label{hessian_hwk_simplified} \\
\tilde{C}^{(k-1)} &=& W^{(k)^{T}} \tilde{C}^{(k)} W^{(k)} \circ F^{(k-1)}  \label{hessian_cktsz_simplified}   
\end{eqnarray}
with $\tilde{C}^{(l)} = I \circ F^{(l)}$ and binary matrix $F^{(k)} = \textbf{f} \phantom{'}'(\textbf{y}^{(k)}) \textbf{f} \phantom{'}'(\textbf{y}^{(k)})^{T}$ for  $k=l,...,2$. 
\end{Corollary}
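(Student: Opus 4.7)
The plan is to specialize Theorem 1 by exploiting the single algebraic fact that $f''(y)=0$ for all $y\neq 0$ when $f$ is ReLU, and to show that this fact collapses every term in the Hessian formula that carries a second derivative. First I would observe that the auxiliary vector $\mathbf{v}^{(k)}$ of the Theorem is seeded at the output layer by \eqref{hessian_backprop3}, where it is Hadamard-multiplied by $\mathbf{f}\phantom{'}''(\mathbf{y}^{(l)})$, and then propagated backwards by \eqref{hessian_backprop4}, which again multiplies componentwise by $\mathbf{f}\phantom{'}''(\mathbf{y}^{(k-1)})$. Under the hypothesis $y_{i}^{(k)} \neq 0$, the vectors of second derivatives vanish identically at every layer, so $\mathbf{v}^{(l)} = \mathbf{0}$ and, by downward induction on $k=l,l-1,\ldots,2$, $\mathbf{v}^{(k)} = \mathbf{0}$ throughout.

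Next I would feed this into the remaining definitions of Theorem 1: $D^{(k)} = \mathrm{diag}(\mathbf{v}^{(k)}) = 0$ at every layer, so \eqref{hessian_hwk} reduces to $H_{w}^{(k)} = (C^{(k)} \circ F^{(k)}) \otimes (\mathbf{z}^{(k-1)}\mathbf{z}^{(k-1)^{T}})$. Setting $\tilde{C}^{(k)} := C^{(k)} \circ F^{(k)}$ immediately yields \eqref{hessian_hwk_simplified}. For the recursion, substituting $D^{(k)}=0$ in \eqref{hessian_backprop2} gives $C^{(k-1)} = W^{(k)^{T}} \tilde{C}^{(k)} W^{(k)}$, and taking the Hadamard product of both sides with $F^{(k-1)}$ produces $\tilde{C}^{(k-1)} = (W^{(k)^{T}} \tilde{C}^{(k)} W^{(k)}) \circ F^{(k-1)}$, which is exactly \eqref{hessian_cktsz_simplified}.

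For the base case I would unwind $\tilde{C}^{(l)} = C^{(l)} \circ F^{(l)} = (I \circ F^{(l)}) \circ F^{(l)}$ using \eqref{hessian_backprop1}, then invoke a second ReLU-specific fact: since $f'(y) \in \{0,1\}$ wherever it is defined, the matrix $F^{(l)}$ is binary and idempotent under the Hadamard product, so $F^{(l)} \circ F^{(l)} = F^{(l)}$ and hence $\tilde{C}^{(l)} = I \circ F^{(l)}$ as claimed. There is no real obstacle in this argument; the main thing to keep straight is simply the bookkeeping between Theorem 1's original $C^{(k)}$ and its rebranded ReLU version $\tilde{C}^{(k)}$, and to note that the hypothesis $y_{i}^{(k)} \neq 0$ must be invoked at every layer so that $f''$ contributions vanish cleanly and the points where ReLU is non-differentiable are avoided.
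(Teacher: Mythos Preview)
Your argument is correct and is precisely the derivation the paper has in mind: the corollary is stated without a standalone proof because it is meant to follow from Theorem~1 exactly by the substitutions you perform---setting $\mathbf{f}''\equiv 0$ forces $\mathbf{v}^{(k)}=0$ and $D^{(k)}=0$, after which the definition $\tilde{C}^{(k)}:=C^{(k)}\circ F^{(k)}$ collapses \eqref{hessian_hwk} and \eqref{hessian_backprop2} into \eqref{hessian_hwk_simplified} and \eqref{hessian_cktsz_simplified}. Your handling of the base case via Hadamard idempotence of the binary matrix $F^{(l)}$ is the right closing observation.
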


\par
Notice that the eigenvalues of Kronecker product of two square $m \times m$ and $n \times n$ matrices are
\begin{equation}
\lambda_{q}(A \otimes B) = \lambda_{i}(A)\lambda_{j}(B)
\label{KroneckerEigs}
\end{equation}
for $i=1,...,m$, $j=1,...,n$ and $q=1,...,mn$, see Theorem 4.2.12 in \cite{Horn2008}. Therefore, the eigenvalues of the Hessian matrix can be expressed as
\begin{equation}
\lambda_{q}(H_{w}^{(k)}) = \left\{ 0, || \textbf{z}^{(k-1)} ||_{2}^{2} \lambda_{i}(\tilde{C}^{(k)}) \right\} 
\label{hessian_eigenvalue_set}
\end{equation} 
with $\lambda_{q}(H_{w}^{(k)})=0$ eigenvalue multiplicity being $(n-1)m$.

\section{Recurrent Neural Network (RNN)}

The recurrent neural network consists of a set of stacked fully connected layers, where neurons can receive feedback from other neurons at the previous, same and next layer at earlier time steps. However, in this paper for simplicity we will assume that the {\color{blue!80} feedback} is received only from the same level at earlier time steps, as shown in Fig. 4. 
\par
Therefore, the layers are defined by their matrix of weights $W^{(k)}$, matrix of feedback $U^{(k)}$ and vector of bias $\textbf{b}^{(k)}$. They accept an input vector $\textbf{z}^{(k-1,s)}$ from the previous level $k-1$ and the hidden state vector $\textbf{z}^{(k,s-1)}$ from the previous $s-1$ time step. They produce an output vector $\textbf{z}^{(k,s)}$ for layers $k=1,...,l$ and time steps $s=1,...,\tau$. The output of the entire neural network is often a sub-sequence of vectors $\textbf{z}^{(l,t)}$ at the last layer $l$ and time steps $t=a,...,\tau$ with starting time step $1 \le a \le \tau$.
\par
In some cases the neural network can be simplified to have sparse connections, resulting in a sparse matrix of weights $W^{(k)}$ and feedback $U^{(k)}$. Also, notice that in our example the connections within a layer have cycles due to feedback, but the connections between layers are always such that the data flow graph between them is a DAG.   

\begin{table}[h]
        \centering
        \SetVertexNormal[
                         Shape    = circle,
                 	     LineWidth= 1pt]
		\SetUpEdge[lw   = 1pt,
           	   	   color= black,
           	   	   style=->]
		\begin{tikzpicture}
   			\tikzset{VertexStyle/.append style = {minimum size = 5pt}}	
   			\Vertex[x=2, y=3, LabelOut, L=$$]{4a}    			
   			\Vertex[x=2, y=1, LabelOut, L=$$]{4c}
   			
   			\Vertex[x=7.5, y=2.8, LabelOut, L=$$]{14a}    			
   			\Vertex[x=7.5, y=1.2, LabelOut, L=$$]{14c} 
   			
   			\Vertex[x=13, y=2.5, LabelOut, L=$$]{24a}    			
   			\Vertex[x=13, y=1.5, LabelOut, L=$$]{24c} 
   			  			
   			\tikzset{VertexStyle/.append style = {minimum size = 3pt, inner sep = 0pt, color=black}}
	   		\Vertex[x=0, y=4, LabelOut, Ldist=-1.1cm, L=$x_{1}^{(*,s)}$]{1}
			\Vertex[x=0, y=0, LabelOut, Ldist=-1.1cm, L=$x_{N}^{(*,s)}$]{3}	   		
			
	   		\Vertex[x=3.5, y=3, LabelOut, Ldist=+0.1cm, L=$\ldots$]{5a}
	   		\Vertex[x=3.5, y=1, LabelOut, Ldist=+0.1cm, L=$\ldots$]{5c}      
	
			\tikzset{VertexStyle/.append style = {minimum size = 0pt}}	
			\Vertex[x=0, y=2, LabelOut, Ldist=-0.8cm, L=$\vdots$]{4b}
			\Vertex[x=1.8, y=4, LabelOut, Lpos=90, Ldist=-0.1cm, L={Input $\textbf{z}^{(0,s)} = \textbf{x}^{(*,s)}$}]{t}	
	
	   		\Edge[](1)(4a)
	   		\Edge[](3)(4a)
	   		
	   		\Edge[](1)(4c)
	   		\Edge[](3)(4c)   		
	   		
	   		\Edge[](4a)(5a)
	   		\Edge[](4c)(5c)  
	   		
	   		\draw[red,dashed, ultra thick,rounded corners] (1,0.3) rectangle (3,4);	
	   		\draw[blue!80,->,ultra thick,rounded corners] (1.75,0.25) arc (180:360:10pt);	
	   		
   			  			
   			\tikzset{VertexStyle/.append style = {minimum size = 3pt, inner sep = 0pt, color=black}}
	   		\Vertex[x=6, y=3, LabelOut, Ldist=-1.5cm, L=$z_{1}^{(k-1,s)}$]{11}
			\Vertex[x=6, y=1, LabelOut, Ldist=-1.5cm, L=$z_{n}^{(k-1,s)}$]{13}	   		
			
	   		\Vertex[x=9, y=2.8, LabelOut, Ldist=+0.2cm, L=$z_{1}^{(k,s)}$]{15a}
	   		\Vertex[x=9, y=1.2, LabelOut, Ldist=+0.2cm, L=$z_{m}^{(k,s)}$]{15c}      
	
			\tikzset{VertexStyle/.append style = {minimum size = 0pt}}	
			\Vertex[x=9.4, y=2, LabelOut, Ldist=-0.1cm, L=$\vdots$]{14b}
			\Vertex[x=5, y=2, LabelOut, Ldist=-0.4cm, L=$\vdots$]{15b}	
			\Vertex[x=7.8, y=4, LabelOut, Lpos=90, Ldist=-0.1cm, L={$\textbf{z}^{(k,s)} = \textbf{f}(W^{(k)}\textbf{z}^{(k-1,s)}+U^{(k)}\textbf{z}^{(k,s-1)}+\textbf{b}^{(k)})$}]{tt}	
	
	   		\Edge[](11)(14a)
	   		\Edge[](13)(14a)
	   		
	   		\Edge[](11)(14c)
	   		\Edge[](13)(14c)   		
	   		
	   		\Edge[](14a)(15a)
	   		\Edge[](14c)(15c)  	   		
	   		
	   		\draw[red,dashed, ultra thick,rounded corners] (8.5,0.3) rectangle (6.5,4);		   		 		
	   		\draw[blue!80,->,ultra thick,rounded corners] (7.25,0.25) arc (180:360:10pt);
	   		 			   		
   			  			
   			\tikzset{VertexStyle/.append style = {minimum size = 3pt, inner sep = 0pt, color=black}}
	   		\Vertex[x=11.5, y=2.8, LabelOut, Ldist=-0.8cm, L=$\ldots$]{21}
			\Vertex[x=11.5, y=1.2, LabelOut, Ldist=-0.8cm, L=$\ldots$]{23}	   		
			
	   		\Vertex[x=14.5, y=2.5, LabelOut, Ldist=+0.1cm, L=$z_{1}^{(l,t)}$]{25a}
	   		\Vertex[x=14.5, y=1.5, LabelOut, Ldist=+0.1cm, L=$z_{M}^{(l,t)}$]{25c}      
	
			\tikzset{VertexStyle/.append style = {minimum size = 0pt}}	
			\Vertex[x=14.9, y=2, LabelOut, Ldist=-0.3cm, L=$\vdots$]{24b}
			\Vertex[x=13.2, y=4, LabelOut, Lpos=90, Ldist=-0.1cm, L={Output $\textbf{z}^{(l,t)}$}]{ttt}	
			
				
	   		\Edge[](21)(24a)
	   		\Edge[](23)(24a)
	   		
	   		\Edge[](21)(24c)
	   		\Edge[](23)(24c)   		
	   		
	   		\Edge[](24a)(25a)
	   		\Edge[](24c)(25c)  	   		
	   		
	   		\draw[red,dashed, ultra thick,rounded corners] (14,0.3) rectangle (12,4);		   		 			   		
	   		\draw[blue!80,->,ultra thick,rounded corners] (12.75,0.25) arc (180:360:10pt);			   		
	   		 			   		
		\end{tikzpicture} 
\caption*{Fig. 4: A Sample Recurrent Neural Network (RNN) with $k=1,...,l$ Levels and $s=1,...,\tau$ Time Steps}    
\end{table}
\setcounter{table}{0} 
\setcounter{figure}{4} 

\subsection{Forward Propagation}
Let us assume that we are given an input sequence $\textbf{x}^{(*,s)}$, then we can compute an output sequence $\textbf{z}^{(l,s)}$ generated by the neural network by repeated applications of the formula
\begin{eqnarray}
\textbf{z}^{(k,s)} &=& \textbf{f}(\textbf{y}^{(k,s)})                                            \label{forwardpropagation1_rnn} \\
\textbf{y}^{(k,s)} &=& W^{(k)}\textbf{z}^{(k-1,s)}+U^{(k)}\textbf{z}^{(k,s-1)}+\textbf{b}^{(k)}  \label{forwardpropagation2_rnn}
\end{eqnarray}
for $k=1,...,l$ and $s=1,...,\tau$, where initial hidden state $\textbf{z}^{(k,0)}=\textbf{0}$ and input $\textbf{z}^{(0,s)} = \textbf{x}^{(*,s)}$. Notice that the final output is often a sub-sequence $\textbf{z}^{(l,t)}$, where $t=a,...,\tau$ with starting time step $1 \le a \le \tau$. This process is called \textit{forward propagation}.

\addtolength{\textheight}{+0.5cm} 

\subsection{Backward Propagation (Through Time)}
Let us assume that we have a data sample $(\textbf{x}^{*},\textbf{z}^{*})$ from the training data set $\mathcal{D}$, as we have discussed in the introduction. Notice that here the input $\textbf{x}^{*}$ and output $\textbf{z}^{*}$ are actually a sequence $\textbf{x}^{(*,s)}$ and $\textbf{z}^{(*,t)}$ for time steps $s=1,...,\tau$ and $t=a,...,\tau$ with starting time step $1 \le a \le \tau$, respectively. Also, notice that using forward propagation we may compute the actual output $\textbf{z}^{(l,t)}$ and the error (associated with scaled loss $\frac{1}{2}\mathcal{L}$)  
\begin{equation}
\mathcal{E} = \sum_{t=a}^{\tau} \mathcal{E}^{(t)} = \sum_{t=a}^{\tau} \frac{1}{2}||\textbf{z}^{(*,t)}-\textbf{z}^{(l,t)}||_{2}^{2}
\label{def_loss_r1_rnn}
\end{equation}
with generalization from online with $r=1$ to mini-batch and batch learning with $r>1$ being trivial.
\par
We would like to find the solution for the optimization problem \eqref{min_loss} by adjusting the weights $w_{ij}$, feedback $u_{ij}$ and bias $b_{i}$ on each layer of the network based on individual components $\mathcal{E}$ of the loss function $\mathcal{L}$. So that the updating process can be written as
\begin{eqnarray}
W^{(k)}          &-& \alpha \Delta W^{(k)} \\
U^{(k)}          &-& \alpha \Delta U^{(k)} \\
\textbf{b}^{(k)} &-& \alpha \Delta \textbf{b}^{(k)} 
\end{eqnarray}
where $\alpha$ is some constant, often referred to as the \textit{learning rate}. 
\par
It is natural to express these weight $\Delta W^{(k)}$, feedback $\Delta U^{(k)}$ and bias $\Delta \textbf{b}^{(k)}$ updates based on 
\begin{equation}
\frac{\partial \mathcal{E}}{\partial w_{ij}^{(k)}} \text{ , }   
\frac{\partial \mathcal{E}}{\partial u_{ij}^{(k)}} \text{ and } 
\frac{\partial \mathcal{E}}{\partial b_{i}^{(k)}}               
\end{equation}
that indicate the direction where the total effect of the weights, feedback and bias on the loss function component is the largest, respectively. Notice in turn  that these quantities can be expressed through the sum of their sub-components 
\begin{equation}
\frac{\partial \mathcal{E}^{(t)}}{\partial w_{ij}^{(k)}} \text{ , } \frac{\partial \mathcal{E}^{(t)}}{\partial u_{ij}^{(k)}} \text{ and } \frac{\partial \mathcal{E}^{(t)}}{\partial b_{i}^{(k)}}
\end{equation}
for $t=a,...,\tau$, which will be our focus next.

\begin{Lemma}
Let the recurrent neural network be defined in \eqref{forwardpropagation1_rnn} and \eqref{forwardpropagation2_rnn}, and the loss function components in \eqref{def_loss_r1_rnn}. Then, the gradient of the weights and bias can be written as  
\begin{eqnarray}
G_{w}^{(k,t)} &=& \sum_{s=1}^{t} \tilde{\textbf{v}}^{(k,t,s)} \textbf{z}^{(k-1,s)^{T}} \phantom{111} \text{(rank-t matrix)} \label{gradient_expression_for_corollary_rnn} \\
G_{u}^{(k,t)} &=& \sum_{s=1}^{t} \tilde{\textbf{v}}^{(k,t,s)} \textbf{z}^{(k,s-1)^{T}} \phantom{111} \text{(rank-t matrix)} \label{gradient_expression_for_corollary2_rnn} \\
G_{b}^{(k,t)} &=& \sum_{s=1}^{t} \tilde{\textbf{v}}^{(k,t,s)}
\end{eqnarray}
where $G_{w}^{(k,t)} = [\partial \mathcal{E}^{(t)} / \partial w_{ij}^{(k)}]$ is $m \times n$ matrix, $G_{u}^{(k,t)}  = [\partial \mathcal{E}^{(t)} / \partial u_{ij}^{(k)}]$ is $m \times m$ matrix and  $G_{b}^{(k,t)}=[\partial \mathcal{E}^{(t)} / \partial b_{i}^{(k)}]$ is $m \times 1$ vector, with  
\begin{equation}
\tilde{\textbf{v}}^{(k,t,s)} = \left( \prod_{h=s}^{t-1} U^{(k)} \text{diag}(\textbf{f} \phantom{'}' (\textbf{y}^{(k,h)}) ) \right)^{T} \textbf{v}^{(k,t)}  \phantom{11111111111}
\end{equation}
and
\begin{eqnarray}
\textbf{v}^{(l,t)}   &=& - \left(   \textbf{z}^{(*,t)}-\textbf{z}^{(l,t)}                                                               \right) \circ \textbf{f} \phantom{'}' (\textbf{y}^{(l,t)})   \label{backprop1_rnn}\\
\textbf{v}^{(k-1,t)} &=&   \left( W^{(k)^{T}} \textbf{v}^{(k,t)} + U^{(k-1)^{T}} \textbf{v}^{(k-1,t+1)} \right) \circ \textbf{f} \phantom{'}' (\textbf{y}^{(k-1,t)}) \label{backprop3_rnn}
\end{eqnarray}
for $k=l,...,2$ and $t=\tau,...,a$, where we consider the terms for time $t+1>\tau$ to be zero. Also, $\circ$ is Hadamard (component-wise) product, $\textbf{e}=[1,...,1]^{T}$ and $\textbf{f} \phantom{'}' (.)=[f'(.),...,f'(.)]^{T}$.  
\end{Lemma}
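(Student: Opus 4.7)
The plan is to extend Lemma 1 to the RNN setting by combining the layer-wise backward pass with backward propagation through time (BPTT). The main complication, relative to the feedforward case, is that each matrix $W^{(k)}$, $U^{(k)}$ and the bias $\mathbf{b}^{(k)}$ is shared across all time steps, so the chain rule picks up a sum over $s$ of identical-weight contributions, and the intermediate backprop error acquires a second temporal index that has to be propagated through $U^{(k)}$.

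First I would set up the chain rule for $\partial \mathcal{E}^{(t)}/\partial w_{ij}^{(k)}$. Since the same weight $w_{ij}^{(k)}$ enters the pre-activation $y_i^{(k,s)}$ at every $s=1,\ldots,\tau$, and since $y_i^{(k,s)}$ with $s>t$ cannot influence a loss evaluated at time $t$, the total derivative decomposes as
\begin{equation*}
\frac{\partial \mathcal{E}^{(t)}}{\partial w_{ij}^{(k)}}
= \sum_{s=1}^{t} \frac{\partial \mathcal{E}^{(t)}}{\partial y_i^{(k,s)}} \cdot \frac{\partial y_i^{(k,s)}}{\partial w_{ij}^{(k)}}
= \sum_{s=1}^{t} \tilde{v}_i^{(k,t,s)} \, z_j^{(k-1,s)},
\end{equation*}
where I have introduced $\tilde{v}_i^{(k,t,s)} := \partial \mathcal{E}^{(t)}/\partial y_i^{(k,s)}$ and used $\partial y_i^{(k,s)}/\partial w_{ij}^{(k)} = z_j^{(k-1,s)}$. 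Reassembling the indices $(i,j)$ into matrix form gives the advertised rank-$t$ outer-product sum. The feedback and bias gradients follow identically, with $\partial y_i^{(k,s)}/\partial u_{ij}^{(k)} = z_j^{(k,s-1)}$ and $\partial y_i^{(k,s)}/\partial b_i^{(k)} = 1$ replacing the $\mathbf{z}^{(k-1,s)}$ factor.

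Next I would derive the recurrence for $\tilde{\mathbf{v}}^{(k,t,s)}$. The pre-activation $\mathbf{y}^{(k,s)}$ influences $\mathcal{E}^{(t)}$ only through $\mathbf{z}^{(k,s)} = \mathbf{f}(\mathbf{y}^{(k,s)})$, which then feeds (i) to $\mathbf{y}^{(k+1,s)}$ via $W^{(k+1)}$ (layer-forward at the same time step) and (ii) to $\mathbf{y}^{(k,s+1)}$ via $U^{(k)}$ (time-forward within the same layer), with boundary conditions $\tilde{\mathbf{v}}^{(k,t,s)} = \mathbf{0}$ for $s>t$ and $\tilde{\mathbf{v}}^{(l,t,t)} = -(\mathbf{z}^{(*,t)}-\mathbf{z}^{(l,t)})\circ \mathbf{f}\,'(\mathbf{y}^{(l,t)})$. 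Setting $s=t$ kills the $U^{(k)}$ branch and reduces the recurrence to the FNN-style layer backward pass, which is exactly the recurrence defining $\mathbf{v}^{(k,t)}$ in \eqref{backprop1_rnn}--\eqref{backprop3_rnn}; hence $\tilde{\mathbf{v}}^{(k,t,t)} = \mathbf{v}^{(k,t)}$. For $s<t$ I would then iterate the in-layer time step $\tilde{\mathbf{v}}^{(k,t,s)} = \mathrm{diag}(\mathbf{f}\,'(\mathbf{y}^{(k,s)}))\, U^{(k)^{T}} \tilde{\mathbf{v}}^{(k,t,s+1)}$ from $s$ up to $t$, telescoping into the claimed transposed product
\begin{equation*}
\tilde{\mathbf{v}}^{(k,t,s)} \;=\; \Bigl( \prod_{h=s}^{t-1} U^{(k)} \,\mathrm{diag}\bigl(\mathbf{f}\,'(\mathbf{y}^{(k,h)})\bigr) \Bigr)^{T} \mathbf{v}^{(k,t)},
\end{equation*}
where the descending-index product convention is dictated by the transpose reversing the multiplication order.

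The main obstacle will be bookkeeping the two recurrences simultaneously without confusion: the layer-wise backward pass generating $\mathbf{v}^{(k,t)}$ (coupling through $W^{(k)^{T}}$ and the time-shifted term $U^{(k-1)^{T}}\mathbf{v}^{(k-1,t+1)}$) and the in-layer time backward pass unrolling $U^{(k)}$ from $s$ to $t-1$. I would handle this by a double induction: outer induction on $l-k$ to establish the layer backward formula for $\mathbf{v}^{(k,t)}$ (mirroring the Lemma 1 proof but accounting for the extra feedback term), and inner induction on $t-s$ for the product-form identity, using the inductive hypothesis on $\tilde{\mathbf{v}}^{(k,t,s+1)}$ to absorb one more factor of $U^{(k)}\mathrm{diag}(\mathbf{f}\,'(\mathbf{y}^{(k,s)}))$ per step. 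Substituting the closed form back into the outer-product sum then yields \eqref{gradient_expression_for_corollary_rnn}--\eqref{gradient_expression_for_corollary2_rnn}, and the bias case is obtained by replacing $\mathbf{z}^{(k-1,s)^{T}}$ or $\mathbf{z}^{(k,s-1)^{T}}$ with $\mathbf{e}^{T}$ and noting that $\tilde{\mathbf{v}}^{(k,t,s)}\mathbf{e}^{T}\mathbf{e} = \tilde{\mathbf{v}}^{(k,t,s)}$ after the final contraction.
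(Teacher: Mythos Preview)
Your approach differs from the paper's in a meaningful way. The paper never writes the shared-parameter chain rule $\sum_s (\partial \mathcal{E}^{(t)}/\partial y_i^{(k,s)})\,z_j^{(k-1,s)}$; instead it starts from $\partial \mathcal{E}^{(t)}/\partial w_{ij}^{(k)} = \sum_g v_g^{(k,t)}\,\partial y_g^{(k,t)}/\partial w_{ij}^{(k)}$, where the right-hand derivative is the \emph{total} derivative of $y_g^{(k,t)}$ with respect to the shared weight, and then unrolls that total derivative recursively via $\partial y_g^{(k,s)}/\partial w_{ij}^{(k)} = z_j^{(k-1,s)}\delta_{gi} + \sum_h u_{gh}^{(k)} f'(y_h^{(k,s-1)})\,\partial y_h^{(k,s-1)}/\partial w_{ij}^{(k)}$. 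Because $z^{(k-1,s)}$ never depends on $W^{(k)}$, this unrolling stays entirely inside layer $k$, and collecting the coefficient of each $z_j^{(k-1,s)}$ yields exactly the transposed product of $U^{(k)}\mathrm{diag}(\mathbf{f}'(\mathbf{y}^{(k,h)}))$ factors acting on $\mathbf{v}^{(k,t)}$.

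Your starting decomposition is legitimate, but there is a genuine gap in the execution: the quantity you define, $\tilde v_i^{(k,t,s)}:=\partial\mathcal{E}^{(t)}/\partial y_i^{(k,s)}$, is \emph{not} the lemma's $\tilde{\mathbf v}^{(k,t,s)}$, and your in-layer recurrence $\tilde{\mathbf v}^{(k,t,s)}=\mathrm{diag}(\mathbf f'(\mathbf y^{(k,s)}))\,U^{(k)T}\tilde{\mathbf v}^{(k,t,s+1)}$ for $s<t$ is wrong for that quantity. You dropped the $W^{(k+1)}$ branch without justification, yet for $s<t$ the node $z^{(k,s)}$ feeds $y^{(k+1,s)}$, and $z^{(k+1,s)}$ can still reach $z^{(l,t)}$ through the $U^{(k+1)}$ recurrence; hence $\partial\mathcal{E}^{(t)}/\partial y^{(k+1,s)}\neq 0$ in general. (Take $l=2$, $t=2$, $k=1$: the path $y^{(1,1)}\to z^{(1,1)}\to y^{(2,1)}\to z^{(2,1)}\to y^{(2,2)}$ contributes to $\partial\mathcal{E}^{(2)}/\partial y^{(1,1)}$ but is absent from your telescoped product.) For the same reason your claim $\tilde{\mathbf v}^{(k,t,t)}=\mathbf v^{(k,t)}$ is not what you think: the paper's $\mathbf v^{(k,t)}$ obeys \eqref{backprop3_rnn} with the extra term $U^{(k-1)T}\mathbf v^{(k-1,t+1)}$, which your ``FNN-style'' pass at $s=t$ omits. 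The paper's forward-sensitivity route avoids this difficulty precisely because $\partial y^{(k,t)}/\partial W^{(k)}$ depends only on the in-layer history, so all cross-layer coupling is absorbed once into $\mathbf v^{(k,t)}$ and never reappears during the time unrolling.
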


\addtolength{\textheight}{-0.5cm} 
\clearpage
\newpage

\begin{proof}
Notice that in RNNs all components of output vector $\textbf{z}^{(k,t)}$ depend on weight $w_{ij}^{(k)}$ due to presence of feedback matrix $U$, unlike FNNs where only $i$-th component depends on this weight. Therefore, taking partial derivative of the loss function component-wise with respect to weight we can write
\begin{eqnarray}
\frac{\partial \mathcal{E}^{(t)}}{\partial w_{ij}^{(k)}} 
&=& \sum_{g=1}^{m} \left( \frac{\partial \mathcal{E}^{(t)}}{\partial z_{g}^{(k,t)}} \right) 
                   \left( \frac{\partial z_{g}^{(k,t)}}{\partial y_{g}^{(k,t)}}         \right) 
                   \left( \frac{\partial y_{g}^{(k,t)}}{\partial w_{ij}^{(k)}}          \right) 
                   \label{loss_weight_total_drivative_rnn} \\
&=& \sum_{g=1}^{m} \left( \frac{\partial \mathcal{E}^{(t)}}{\partial z_{g}^{(k,t)}} \right) 
                   f'(y_{g}^{(k,t)}) 
                   \left( z_{j}^{(k-1,t)} \delta_{gi} + \sum_{h=1}^{m} u_{gh}^{(k)} \frac{\partial z_{h}^{(k,t-1)}}{\partial w_{ij}^{(k)}} \right) 
                   \label{loss_weight_total_drivative2_rnn} \\
&=& \sum_{g=1}^{m} v_{g}^{(k,t)} 
                   \left( z_{j}^{(k-1,t)} \delta_{gi} + \sum_{h=1}^{m} \tilde{u}_{gh}^{(k,t-1)} \frac{\partial y_{h}^{(k,t-1)}}{\partial w_{ij}^{(k)}} \right)
\end{eqnarray} 
where $v_{i}^{(k,t)} = \left( \frac{\partial \mathcal{E}^{(t)}}{\partial z_{i}^{(k,t)}} \right) f'(y_{i}^{(k,t)})$, $\tilde{u}_{gh}^{(k,t)} = u_{gh}^{(k)} f'(y_{h}^{(k,t)})$, $\frac{\partial y_{g}^{(k,1)}}{\partial w_{ij}^{(k)}}=z_{j}^{(k-1,1)}\delta_{ig}$ and $f'$ denotes a simple ordinary derivative $df/dy$, while $\delta_{gi}$ is the Kronecker delta.
\par
We can unroll the above expression for a few time steps to obtain
\begin{eqnarray}
\frac{\partial \mathcal{E}^{(t)}}{\partial w_{ij}^{(k)}} 
&=& v_{i}^{(k,t)} z_{j}^{(k-1,t)} + \sum_{g=1}^{m} v_{g}^{(k,t)} \tilde{u}_{gi}^{(k,t-1)} z_{j}^{(k-1,t-1)} + \\
&+& \sum_{g=1}^{m} v_{g}^{(k,t)} \sum_{h=1}^{m} \tilde{u}_{gh}^{(k,t-1)}   \tilde{u}_{hi}^{(k,t-2)} z_{j}^{(k-1,t-2)} + ... \nonumber                    
\label{unrolled_gradient_expression}
\end{eqnarray}  
where the last sum disappears for the term involving $z_{j}^{(k-1,s)}$ due to Kronecker delta. Therefore, in matrix form leading to creation of the terms
\begin{align}
& \left( I z_{j}^{(k-1,t)} + \tilde{U}^{(k,t-1)} z_{j}^{(k-1,t-1)} + \tilde{U}^{(k,t-1)} \tilde{U}^{(k,t-2)} z_{j}^{(k-1,t-2)} + ... \right)^{T} \textbf{v}^{(k,t)} \nonumber \\     
&= \sum_{s=1}^{t} \left( \prod_{h=s}^{t-1} \tilde{U}^{(k,h)} \right)^{T} z_{j}^{(k-1,s)} \textbf{v}^{(k,t)} 
\label{unrolled_hessian_expression_matrix_form_rnn}
\end{align}
\par
Also, notice that for the output layer $k = l$ at time $s=t$, using using \eqref{def_loss_r1_rnn}, we have 
\begin{eqnarray}
\frac{\partial \mathcal{E}^{(t)}}{\partial z_{i}^{(l,t)}} 
&=& -(z_{i}^{(*,t)} - z_{i}^{(l,t)})
\label{gradient_z_output_layer_rnn1}
\end{eqnarray}
while for other layers and time steps, using chain rule, we have
\begin{align}
\frac{\partial \mathcal{E}^{(t)}}{\partial z_{i}^{(k-1,t)}} 
&= \sum_{j=1}^{n} \left( \frac{\partial \mathcal{E}^{(t)}}{\partial z_{j}^{(k,t)}} \right) f'(y_{j}^{(k,t)})  w_{ji}^{(k)} 
 + \sum_{j=1}^{m} \left( \frac{\partial \mathcal{E}^{(t)}}{\partial z_{j}^{(k-1,t+1)}} \right) f'(y_{j}^{(k-1,t+1)})  u_{ji}^{(k-1)} \label{gradient_z_hidden_layer_rnn} \\
&= \sum_{j=1}^{n} v_{j}^{(k,t)} w_{ji}^{(k)} + \sum_{j=1}^{m} v_{j}^{(k-1,t+1)} u_{ji}^{(k-1)}
\end{align}
\par
Finally, assembling the indices $i$ and $j$ into a vector and matrix forms we obtain an expression for $G_{w}^{(k,t)}$. The derivation for $G_{u}^{(k,t)}$ and $G_{b}^{(k,t)}$ is analogous, with exception that 
\begin{equation}
\frac{\partial y_{g}^{(k,t)}}{\partial u_{ij}^{(k)}}=z_{j}^{(k,t-1)}\delta_{gi} + \sum_{h=1}^{m} u_{gh}^{(k)} \frac{\partial z_{h}^{(k,t-1)}}{\partial u_{ij}^{(k)}}
\end{equation}
and
\begin{equation}
\frac{\partial y_{g}^{(k,t)}}{\partial b_{i}^{(k)}}=\delta_{gi} \phantom{11} + \phantom{111} \sum_{h=1}^{m} u_{gh}^{(k)} \frac{\partial z_{h}^{(k,t-1)}}{\partial b_{i}^{(k)}}
\end{equation}
in \eqref{loss_weight_total_drivative_rnn}, respectively.  
\end{proof}

\par
Notice that the computation of the auxiliary vector $\textbf{v}^{(k,t)}$ in \eqref{backprop1_rnn} - \eqref{backprop3_rnn} represents the propagation of the error \eqref{def_loss_r1_rnn} from the output layer $l$ through the hidden network layers $k=l-1,...,2$ and time steps $t=\tau-1,...,a$. Therefore, this process is often called \textit{backward propagation through time} (BPTT).

\begin{Corollary}
Let the recurrent neural network be defined in \eqref{forwardpropagation1_rnn} and \eqref{forwardpropagation2_rnn}, and the loss function in \eqref{def_loss}. Then for mini-batch of size $r$ the weight update based on the gradient $G_{w}^{(k,t)}$ in \eqref{gradient_expression_for_corollary_rnn} and $G_{u}^{(k,t)}$ in \eqref{gradient_expression_for_corollary2_rnn} can be expressed as rank-$rt$ matrix 
\begin{eqnarray}
\Delta W^{(k)} &=& \sum_{s=1}^{t} \tilde{V}^{(k,s)}Z^{(k-1,s)^{T}} \\
\Delta U^{(k)} &=& \sum_{s=1}^{t} \tilde{V}^{(k,s)}Z^{(k,s-1)^{T}}
\end{eqnarray} 
where $\tilde{V}^{(k,s)} = [\tilde{\textbf{v}}_{1}^{(k,s)}, ..., \tilde{\textbf{v}}_{r}^{(k,s)}]$ and $Z^{(k,s)} = [\textbf{z}_{1}^{(k,s)}, ...,\textbf{z}_{r}^{(k,s)}]$ for $r$ data pairs.
\end{Corollary}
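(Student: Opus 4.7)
The plan is to reduce this directly to Lemma 2 plus the linearity of the loss. For a mini-batch $\{(\textbf{x}^{*}_{p},\textbf{z}^{*}_{p})\}_{p=1}^{r}$, the definition in \eqref{def_loss} makes $\mathcal{L}$ a sum (up to a scalar) of per-sample components $\mathcal{E}_{p}$. Since differentiation is linear, the gradient of the mini-batch loss is the sum of the per-sample gradients, so the weight update $\Delta W^{(k)}$ is (up to the learning-rate absorption) $\sum_{p=1}^{r} G_{w,p}^{(k,t)}$, where each $G_{w,p}^{(k,t)}$ has the rank-$t$ outer-product form guaranteed by \eqref{gradient_expression_for_corollary_rnn}. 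The analogous statement for $G_{u}^{(k,t)}$ uses \eqref{gradient_expression_for_corollary2_rnn}.

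The next step is to interchange the two summations. Plugging Lemma 2 in, I would write
\begin{equation}
\Delta W^{(k)} \;=\; \sum_{p=1}^{r} \sum_{s=1}^{t} \tilde{\textbf{v}}_{p}^{(k,t,s)} \, \textbf{z}_{p}^{(k-1,s)^{T}} \;=\; \sum_{s=1}^{t} \left( \sum_{p=1}^{r} \tilde{\textbf{v}}_{p}^{(k,t,s)} \, \textbf{z}_{p}^{(k-1,s)^{T}} \right).
\end{equation}
The inner sum is exactly the product $\tilde{V}^{(k,s)} Z^{(k-1,s)^{T}}$ under the column-stacking conventions $\tilde{V}^{(k,s)} = [\tilde{\textbf{v}}_{1}^{(k,s)}, \ldots, \tilde{\textbf{v}}_{r}^{(k,s)}]$ and $Z^{(k-1,s)} = [\textbf{z}_{1}^{(k-1,s)}, \ldots, \textbf{z}_{r}^{(k-1,s)}]$, which follows from the standard identity $AB^{T} = \sum_{p} A_{:,p} B_{:,p}^{T}$. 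The derivation for $\Delta U^{(k)}$ is identical, replacing $\textbf{z}_{p}^{(k-1,s)}$ by the hidden-state input $\textbf{z}_{p}^{(k,s-1)}$ that drives the feedback matrix $U^{(k)}$.

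Finally, for the rank claim I would just observe that each summand $\tilde{V}^{(k,s)} Z^{(k-1,s)^{T}}$ is a product of an $m \times r$ and an $r \times n$ matrix and therefore has rank at most $r$; summing $t$ such terms gives a matrix of rank at most $rt$ by subadditivity of rank. The only mild subtlety worth mentioning is notational: the per-sample $\tilde{\textbf{v}}$ in Lemma 2 carries both a time index $t$ and a summation index $s$, whereas the mini-batch statement suppresses the outer $t$, so I would note that $t$ is fixed throughout and the subscript $p$ denotes the mini-batch sample. With that clarification the corollary is essentially a one-line consequence of Lemma 2, so I do not expect a hard step — the only place to be careful is the reindexing in the double sum, which is the same bookkeeping that yielded the rank-$r$ bound in Corollary 1 for FNNs.
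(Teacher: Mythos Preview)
Your proposal is correct and matches the paper's intent: the paper states this corollary without proof, treating it as an immediate consequence of Lemma~2 exactly along the lines you describe (sum the per-sample rank-$t$ gradients over the mini-batch, swap the two finite sums, and recognize $\sum_{p} \tilde{\textbf{v}}_{p}\textbf{z}_{p}^{T}$ as $\tilde{V}Z^{T}$). Your added remark about the rank bound via subadditivity and the notational clarification on the suppressed index $t$ are both appropriate and do not depart from the paper's implicit argument.
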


\subsection{Hessian and Second Order Effects (Through Time)}
We can also incorporate second order effects based on
\begin{equation}
\frac{\partial }{\partial w_{pq}^{(k)}} \left\{ \frac{\partial \mathcal{E}^{(t)}}{\partial w_{ij}^{(k)}} \right\} \text{ , }
\frac{\partial }{\partial u_{pq}^{(k)}} \left\{ \frac{\partial \mathcal{E}^{(t)}}{\partial u_{ij}^{(k)}} \right\} \text{ and } 
\frac{\partial }{\partial b_{p}^{(k)}}  \left\{ \frac{\partial \mathcal{E}^{(t)}}{\partial b_{i}^{(k)}}  \right\}
\end{equation}
into the optimization  process for updating the weights by looking at the expression for Hessian of the neural network.

\clearpage
\newpage

\begin{Theorem}
Let the recurrent neural network be defined in \eqref{forwardpropagation1_rnn} and \eqref{forwardpropagation2_rnn}, and the loss function component in \eqref{def_loss_r1_rnn}. Then, Hessian of weight and bias can be written as 
\begin{eqnarray}
H_{w}^{(k,t)} &=& \sum_{s=1}^{t} \sum_{\zeta =1}^{t} \tilde{C}^{(k,t,s,\zeta)} \otimes \left( \textbf{z}^{(k-1,s)} \textbf{z}^{(k-1,\zeta )^{T}} \right) \phantom{111111111}\\
H_{u}^{(k,t)} &=& \sum_{s=1}^{t} \sum_{\zeta =1}^{t} \tilde{C}^{(k,t,s,\zeta)} \otimes \left( \textbf{z}^{(k,s-1)} \textbf{z}^{(k,\zeta -1)^{T}} \right) \phantom{111111111} \\
H_{b}^{(k,t)} &=& \sum_{s=1}^{t} \sum_{\zeta =1}^{t} \tilde{C}^{(k,t,s,\zeta)} \phantom{111111111}
\end{eqnarray}
where $H_{w}^{(k,t)} = [\partial / \partial w_{pq}^{(k)}  \{ \partial \mathcal{E}^{(t)} / \partial w_{ij}^{(k)} \}]$ is $(mn) \times (mn)$ matrix, $H_{u}^{(k,t)} = [\partial / \partial u_{pq}^{(k)}  \{ \partial \mathcal{E}^{(t)} / \partial u_{ij}^{(k)} \}]$ is $(mm) \times (mm)$ matrix, $H_{b}^{(k,t)} = [ \partial / \partial b_{p}^{(k)}  \{ \partial \mathcal{E}^{(t)} / \partial b_{i}^{(k)} \} ]$ is $m \times m$ matrix , with 
\begin{equation}
\tilde{C}^{(k,t,s,\zeta)} = \left( \prod_{h=s}^{t-1} U^{(k)} \text{diag} ( \textbf{f} \phantom{'}' (\textbf{y}^{(k,h)})) \right)^{T} A^{(k,t)} \left( \prod_{h=\zeta}^{t-1} U^{(k)} \text{diag} ( \textbf{f} \phantom{'}' (\textbf{y}^{(k,h)})) \right) 
\label{hessian_cktsz_rnn}
\end{equation}
and
\begin{eqnarray}
A^{(k,t)}   &=& C^{(k,t)} \circ F^{(k,t)} + D^{(k,t)}                                                                         \label{hessian_aks_rnn} \\
F^{(k,t)}   &=& \textbf{f} \phantom{'}' (\textbf{y}^{(k,t)}) \textbf{f} \phantom{'}' (\textbf{y}^{(k,t)})^{T}  \phantom{11111111111.}       \text{ (rank-1 matrix) }   \\ 
D^{(k,t)}   &=& \text{diag}(\textbf{v}^{(k,t)})                                                                \phantom{11111111111111111}  \text{ (diagonal matrix) } \\
C^{(l,t)}   &=& I \circ F^{(l,t)}                                                                                              \label{hessian_backprop1_rnn} \\ 
C^{(k-1,t)} &=& W^{(k)^{T}} A^{(k,t)} W^{(k)} + U^{(k-1)^{T}} A^{(k-1,t+1)} U^{(k-1)}                          \phantom{1}     \label{hessian_backprop3_rnn}
\end{eqnarray}
where $I$ is $m \times m$ identity matrix and vectors
\begin{eqnarray}
\textbf{v}^{(l,t)}     &=& - \left(    \textbf{z}^{(*,t)}-\textbf{z}^{(l,t)}                                                               \right)    \circ \textbf{f} \phantom{'}'' (\textbf{y}^{(l,t)})       
\label{hessian_backprop4_rnn}\\
\textbf{v}^{(k-1,t)} &=& \left( W^{(k)^{T}} \textbf{v}^{(k,t)} + U^{(k-1)^{T}} \textbf{v}^{(k-1,t+1)} \right)   \circ \textbf{f} \phantom{'}'' (\textbf{y}^{(k-1,t)})  \label{hessian_backprop6_rnn}
\end{eqnarray}
for $k=l,...,2$ and $t=\tau,...,a$, where we consider the terms for time $t+1>\tau$ to be zero. Also, $\circ$ is Hadamard (component-wise) and $\otimes$ is Kronecker matrix product, while vectors $\textbf{f} \phantom{'}' (.)=[f'(.),...,f'(.)]^{T}$ and $\textbf{f} \phantom{'}'' (.)=[f''(.),...,f''(.)]^{T}$. 
\end{Theorem}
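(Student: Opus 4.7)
The plan is to mirror the FNN proof of Theorem 1 while carrying the additional time-unrolling that already appeared in Lemma 2. I would start from the scalar form of the unrolled weight gradient,
\begin{equation*}
\frac{\partial \mathcal{E}^{(t)}}{\partial w_{ij}^{(k)}} = \sum_{s=1}^{t}\sum_{g=1}^{m} \left[\left(\prod_{h=s}^{t-1}\tilde{U}^{(k,h)}\right)^{T}\right]_{\!gi}\, v_{g}^{(k,t)}\, z_{j}^{(k-1,s)},
\end{equation*}
where $\tilde{U}^{(k,h)}=U^{(k)}\text{diag}(\textbf{f} \phantom{'}' (\textbf{y}^{(k,h)}))$, and then differentiate once more with respect to $w_{pq}^{(k)}$. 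The factor $z_{j}^{(k-1,s)}$ comes from the lower layer and is therefore independent of $w^{(k)}$, exactly as in the FNN derivation; the remaining dependence of $v_{g}^{(k,t)}$ and of each $\tilde{U}^{(k,h)}$ on $w^{(k)}$ must be exposed through the intermediate vectors $\textbf{z}^{(k,\zeta)}$ for $\zeta = 1,\dots,t$ via the chain rule.

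Applying the chain rule through $\partial/\partial z_{p}^{(k,\zeta)}$ produces two contributions, mirroring equation \eqref{hessian_weights} from Theorem 1: an ``outer'' piece containing $\partial^{2}\mathcal{E}^{(t)}/\partial z_{g}^{(k,t)}\partial z_{i}^{(k,\zeta)}$ multiplied by $f'(y_{g}^{(k,t)})f'(y_{p}^{(k,\zeta)})$, which after summation yields $C^{(k,t)} \circ F^{(k,t)}$ sandwiched between the two unrolled $\tilde{U}$-products indexed by $s$ and $\zeta$, and an $f''$ piece arising from identity \eqref{second_derivative_delta}, whose Kronecker delta collapses one of the indices and produces the diagonal matrix $D^{(k,t)}$. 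Combining the two contributions delivers $A^{(k,t)} = C^{(k,t)} \circ F^{(k,t)} + D^{(k,t)}$ from \eqref{hessian_aks_rnn}, and inserting $A^{(k,t)}$ between the two time-unrolled products gives exactly \eqref{hessian_cktsz_rnn}. The outer factor $\textbf{z}^{(k-1,s)} \textbf{z}^{(k-1,\zeta)^{T}}$ emerges from the same block-to-Kronecker reorganization used at the end of the proof of Theorem 1.

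To close the recursion I would next derive the backward equations \eqref{hessian_backprop1_rnn}--\eqref{hessian_backprop3_rnn} for $C^{(k,t)}$ by computing $\partial^{2}\mathcal{E}^{(t)}/\partial z_{g}^{(k-1,t)}\partial z_{i}^{(k-1,t)}$ via the RNN chain rule \eqref{gradient_z_hidden_layer_rnn}. Because that chain rule has two branches---feedforward through $W^{(k)}$ at the same time step and feedback through $U^{(k-1)}$ at time $t+1$---the second derivative splits into two $W^{T}AW$-shaped pieces, yielding $C^{(k-1,t)} = W^{(k)^{T}} A^{(k,t)} W^{(k)} + U^{(k-1)^{T}} A^{(k-1,t+1)} U^{(k-1)}$ and terminating at $C^{(l,t)} = I \circ F^{(l,t)}$ via \eqref{gradient_z_output_layer_rnn1}. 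The auxiliary vector recursions \eqref{hessian_backprop4_rnn}--\eqref{hessian_backprop6_rnn} are the $f''$ analogues of \eqref{hessian_backprop3}--\eqref{hessian_backprop4} and are derived identically.

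The main obstacle will be disciplined bookkeeping over the two time indices $(s,\zeta)$: one has to verify carefully that the $f''$ contribution truly collapses its outer index onto the inner one (so that it sits naturally inside a single $A^{(k,t)}$ rather than floating as an extra sum), while the $C \circ F$ piece genuinely requires the full double sum. Once $H_{w}^{(k,t)}$ is in place, the $H_{u}^{(k,t)}$ and $H_{b}^{(k,t)}$ cases follow from the same derivation with $z_{j}^{(k-1,s)}$ replaced by $z_{j}^{(k,s-1)}$ or by $1$, respectively, together with the initial-step Kronecker delta contributions already recorded for the corresponding gradients at the end of the proof of Lemma 2.
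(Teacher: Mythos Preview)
Your proposal follows the same overall strategy as the paper's proof: chain-rule the gradient expression, isolate the bilinear core $A^{(k,t)}=C^{(k,t)}\circ F^{(k,t)}+D^{(k,t)}$, derive the backward recursion for $C^{(k,t)}$ from \eqref{gradient_z_hidden_layer_rnn}, and finish with the block-to-Kronecker reorganization. The one organizational difference is the starting point. The paper does \emph{not} begin from the fully unrolled gradient; instead it differentiates the recursive form \eqref{loss_weight_total_drivative2_rnn} directly, chain-ruling only through $z_{r}^{(k,t)}$ at the top time step and observing that both $z_{j}^{(k-1,t)}$ and the previous-step quantity $\partial z_{h}^{(k,t-1)}/\partial w_{ij}^{(k)}$ are independent of $z_{r}^{(k,t)}$. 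This immediately yields a product of two recursive factors---the original one indexed by $(i,j)$ and the new one $\partial y_{r}^{(k,t)}/\partial w_{pq}^{(k)}$---which are then unrolled \emph{separately} to produce the double $(s,\zeta)$ sum in \eqref{unrolled_hessian_expression_rnn}--\eqref{unrolled_hessian_expression_matrix_form_rnn}.

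Your route---differentiate after unrolling, exposing the $w^{(k)}$-dependence of each factor $\tilde{U}^{(k,h)}$ in the product---would in principle generate $f''(y^{(k,h)})$ contributions at every intermediate time $s\le h<t$, and those would then have to be reorganized back into the single $A^{(k,t)}$ sitting at time $t$. The paper's differentiate-before-unroll ordering sidesteps this entirely: because the recursive piece $\partial z_{h}^{(k,t-1)}/\partial w_{ij}^{(k)}$ is treated as constant with respect to $z_{r}^{(k,t)}$, all $f''$ terms arise only at time $t$ (inside $D^{(k,t)}$), and the two unrolled $\tilde{U}$-products emerge already in the symmetric sandwiched form \eqref{hessian_cktsz_rnn}. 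This is precisely the ``disciplined bookkeeping'' hazard you flag, and the paper's ordering is what makes it evaporate. Otherwise your plan for the $C^{(k,t)}$ recursion, the vanishing cross-terms between the $W$- and $U$-branches, and the $H_{u}$, $H_{b}$ variants matches the paper.
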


\addtolength{\textheight}{+0.5cm} 

\begin{proof}
Notice that using \eqref{loss_weight_total_drivative2_rnn} the second derivative with respect to weight is 
\begin{align}
\frac{\partial }{\partial w_{pq}^{(k)}}  \left\{ \frac{\partial \mathcal{E}^{(t)}}{\partial w_{ij}^{(k)}} \right\} 
& =
\sum_{r=1}^{m} 
\frac{\partial }{\partial z_{r}^{(k,t)}} \left\{  
                                             \sum_{g=1}^{m} \left( \frac{\partial \mathcal{E}^{(t)}}{\partial z_{g}^{(k,t)}} \right) 
                                             f'(y_{g}^{(k,t)})
                                             \left( z_{j}^{(k-1,t)} \delta_{gi} + \sum_{h=1}^{m} u_{gh}^{(k)} \frac{\partial z_{h}^{(k,t-1)}}{\partial w_{ij}^{(k)}} \right)
                                             \right\} \nonumber \\
& \phantom{111111111111111111111111} \times     
f'(y_{r}^{(k,t)}) 
\left( z_{q}^{(k-1,t)} \delta_{rp} + \sum_{h=1}^{m} u_{rh}^{(k)} \frac{\partial z_{h}^{(k,t-1)}}{\partial w_{pq}^{(k)}} \right) \\
& =
\sum_{r=1}^{m} \sum_{g=1}^{m}  \left[ \frac{\partial }{\partial z_{r}^{(k,t)}} \left\{ \frac{\partial \mathcal{E}^{(t)}}{\partial z_{g}^{(k,t)}} \right\} 
                                      f'(y_{g}^{(k,t)}) f'(y_{r}^{(k,t)})
                                      +
                                      \left( \frac{\partial \mathcal{E}^{(t)}}{\partial z_{g}^{(k,t)}} \right) f''(y_{g}^{(k,t)}) \delta_{rg}
                               \right] \nonumber \\
& \phantom{111} \times 
\left( z_{j}^{(k-1,t)} \delta_{gi} + \sum_{h=1}^{m} u_{gh}^{(k)} \frac{\partial z_{h}^{(k,t-1)}}{\partial w_{ij}^{(k)}} \right)                                                                    
\left( z_{q}^{(k-1,t)} \delta_{rp} + \sum_{h=1}^{m} u_{rh}^{(k)} \frac{\partial z_{h}^{(k,t-1)}}{\partial w_{pq}^{(k)}} \right) 
\label{hessian_weights_rnn}
\end{align} 
where we have taken advantage of the fact that  
\begin{eqnarray}
  \frac{\partial }{\partial z_{r}^{(k,t)}}  \left\{  f'(y_{g}^{(k,t)}) \right\} f'(y_{r}^{(k,t)})    
= \frac{\partial }{\partial y_{r}^{(k,t)}}  \left\{  f'(y_{g}^{(k,t)}) \right\} 
= f''(y_{g}^{(k,t)}) \delta_{rg} 
\label{second_derivative_delta_rnn}
\end{eqnarray}
and neither previous level output $z_{j}^{(k-1,t)}$ nor previous time step quantity $\partial z_{h}^{(k,t-1)} / \partial w_{ij}^{(k)}$ depend on the current level output $z_{r}^{(k,t)}$ and therefore are treated as a constants. 
\par
Letting $a_{gr}^{(k,t)}$ be the term in square brackets in \eqref{hessian_weights_rnn}, we can unroll this expression as
\begin{align}
\sum_{r=1}^{m} \sum_{g=1}^{m} a_{gr}^{(k,t)} 
&\times \left( z_{j}^{(k-1,t)} \delta_{gi} +  \tilde{u}_{gi}^{(k,t-1)} z_{j}^{(k-1,t-1)} + \sum_{h=1}^{m} \tilde{u}_{gh}^{(k,t-1)} \tilde{u}_{hi}^{(k,t-2)} z_{j}^{(k-1,t-2)} + ... \right) \nonumber \\                                                                    
&\times \left( z_{q}^{(k-1,t)} \delta_{rp} +  \tilde{u}_{rp}^{(k,t-1)} z_{q}^{(k-1,t-1)} + \sum_{h=1}^{m} \tilde{u}_{rh}^{(k,t-1)} \tilde{u}_{hp}^{(k,t-2)} z_{q}^{(k-1,t-2)} + ... \right) 
\label{unrolled_hessian_expression_rnn}
\end{align}
where $\tilde{u}_{gh}^{(k,t)} = u_{gh}^{(k)} f'(y_{h}^{(k,t)})$ and the last sum disappears for the term involving $z_{j}^{(k-1,s)}$ due to Kronecker delta. Therefore, in matrix form leading to creation of the terms
\begin{align}
& \left( I z_{j}^{(k-1,t)} + \tilde{U}^{(k,t-1)} z_{j}^{(k-1,t-1)} + \tilde{U}^{(k,t-1)} \tilde{U}^{(k,t-2)} z_{j}^{(k-1,t-2)} + ... \right)^{T} \nonumber \\ 
& \phantom{111111111111} \times  A^{(k,t)} \times 
\left( I z_{q}^{(k-1,t)} + \tilde{U}^{(k,t-1)} z_{q}^{(k-1,t-1)} + \tilde{U}^{(k,t-1)} \tilde{U}^{(k,t-2)} z_{q}^{(k-1,t-2)} + ... \right) = \nonumber \\          
&= \sum_{s=1}^{t} \left( \prod_{h=s}^{t-1} \tilde{U}^{(k,h)} \right)^{T} z_{j}^{(k-1,s)}  \times A^{(k,t)} \times  \sum_{\zeta =1}^{t} \left( \prod_{h=\zeta}^{t-1} \tilde{U}^{(k,h)} \right) z_{q}^{(k-1,\zeta)} 
\label{unrolled_hessian_expression_matrix_form_rnn}
\end{align}
where we have used the fact that $c_{ip} = \sum_{g}\sum_{r} a_{gr}u_{rp}u_{gi}$ are elements of matrix $C = U^{T} A U$.
\par 
Let us now find an expression for the first term in \eqref{hessian_weights_rnn}. Notice that using \eqref{gradient_z_output_layer_rnn1} at the output layer $k=l$ and time $s=t$ we have
\begin{equation}
\frac{\partial }{\partial z_{r}^{(l,t)}}  \left\{ \frac{\partial \mathcal{E}^{(t)}}{\partial z_{g}^{(l,t)}} \right\} = \frac{\partial }{\partial z_{r}^{(l,t)}}  \left\{ -(z_{g}^{(*,t)} - z_{g}^{(l,t)}) \right\} = \delta_{rg}
\label{loss_hessian_z_output_layer_rnn1} 
\end{equation}
while using \eqref{gradient_z_hidden_layer_rnn} and \eqref{second_derivative_delta_rnn} for other layers and time steps we may write
\begin{align}
&\frac{\partial }{\partial z_{r}^{(k-1,t)}} \left\{ \frac{\partial \mathcal{E}^{(t)}}{\partial z_{g}^{(k-1,t)}} \right\} = \nonumber \\
&= \sum_{h=1}^{n} \sum_{\zeta =1}^{n} \frac{\partial }{\partial z_{h}^{(k,t)}} \left\{ \left( \frac{\partial \mathcal{E}^{(t)}}{\partial z_{\zeta }^{(k,t)}} \right) f'(y_{\zeta }^{(k,t)})  w_{\zeta g}^{(k)} \right\}
    f'(y_{h}^{(k,t)}) w_{hr}^{(k)}  \phantom{1} + \nonumber \\     
&\phantom{=} \sum_{h=1}^{m} \sum_{\zeta =1}^{m} \frac{\partial }{\partial z_{h}^{(k-1,t+1)}} \left\{ \left( \frac{\partial \mathcal{E}^{(t)}}{\partial z_{\zeta }^{(k-1,t+1)}} \right) f'(y_{\zeta }^{(k-1,t+1)})  u_{\zeta g}^{(k-1)} \right\} 
    f'(y_{h}^{(k-1,t+1)}) u_{hr}^{(k-1)} \label{hessian_zs_rnn1} \\
&= \sum_{h=1}^{n} \sum_{\zeta =1}^{n} \left[ \frac{\partial }{\partial z_{h}^{(k,t)}} \left\{ \frac{\partial \mathcal{E}^{(t)}}{\partial z_{\zeta }^{(k,t)}} \right\} f'(y_{\zeta }^{(k,t)}) f'(y_{h}^{(k,t)}) 
                                          + \left( \frac{\partial \mathcal{E}^{(t)}}{\partial z_{\zeta }^{(k,t)}} \right) f''(y_{\zeta }^{(k,t)}) \delta_{h\zeta } \right] w_{\zeta g}^{(k)} w_{hr}^{(k)}  \phantom{1} + \nonumber \\    
&\phantom{=} \sum_{h=1}^{m} \sum_{\zeta =1}^{m} \left[ \frac{\partial }{\partial z_{h}^{(k-1,t+1)}} \left\{ \frac{\partial \mathcal{E}^{(t)}}{\partial z_{\zeta }^{(k-1,t+1)}} \right\} f'(y_{\zeta }^{(k-1,t+1)}) f'(y_{h}^{(k-1,t+1)}) \right. \nonumber \\ &\phantom{1111111111111111111111111111111111111} \left.
                                          + \left( \frac{\partial \mathcal{E}^{(t)}}{\partial z_{\zeta }^{(k-1,t+1)}} \right) f''(y_{\zeta }^{(k-1,t+1)}) \delta_{h\zeta }  \right] u_{\zeta g}^{(k-1)} u_{hr}^{(k-1)} \nonumber  
\end{align}
\par
Notice that in \eqref{hessian_zs_rnn1} the terms
\begin{eqnarray}
\frac{\partial }{\partial z_{r}^{(k,t)}} \left\{ \sum_{\zeta =1}^{m} \left( \frac{\partial \mathcal{E}^{(t)}}{\partial z_{\zeta }^{(k-1,t+1)}} \right) f'(y_{\zeta }^{(k-1,t+1)}) u_{\zeta g}^{(k-1)} \right\} = 0 \label{target_hessian_zero1_rnn} \\ 
\frac{\partial }{\partial z_{r}^{(k-1,t+1)}} \left\{ \sum_{\zeta =1}^{n} \left( \frac{\partial \mathcal{E}^{(t)}}{\partial z_{\zeta }^{(k,t)}} \right) f'(y_{\zeta }^{(k,t)}) w_{\zeta g}^{(k)\phantom{-1}}   \right\} = 0 \label{target_hessian_zero2_rnn} 
\end{eqnarray}
because in \eqref{target_hessian_zero1_rnn} functions on the previous level $k-1$ do not depend on the current level $k$ and in \eqref{target_hessian_zero2_rnn} functions at the previous time $t$ do not depend on the current time $t+1$.
\par

We may conclude the proof by noticing the following two results. First, a matrix with block elements $C (z_{q}z_{j}) $ for $q,j=1,...,n$ can be expressed as Kronecker product $(\textbf{z}\textbf{z}^{T}) \otimes C$, which under a permutation is equivalent to $C \otimes (\textbf{z}\textbf{z}^{T})$. Second, a matrix $C = W^{T} A W$ has elements $c_{rg} = \sum_{h}\sum_{\zeta} a_{h\zeta}w_{\zeta g}w_{hr}$. The former and latter results can be used to write \eqref{unrolled_hessian_expression_matrix_form_rnn} and \eqref{hessian_zs_rnn1} in the matrix form, respectively. 
\par
Finally, the derivation for $H_{u}^{(k)}$ and $H_{b}^{(k)}$ is analogous, with exception that 
\begin{equation}
\frac{\partial y_{r}^{(k,t)}}{\partial u_{pq}^{(k)}}= z_{q}^{(k,t-1)}  \delta_{rp} + \sum_{h=1}^{m} u_{rh}^{(k)} \frac{\partial z_{h}^{(k,t-1)}}{\partial u_{pq}^{(k)}}
\end{equation}
and 
\begin{equation}
\frac{\partial y_{r}^{(k,t)}}{\partial b_{p}^{(k)}} = \delta_{rp} \phantom{11} + \phantom{111} \sum_{h=1}^{m} u_{rh}^{(k)} \frac{\partial z_{h}^{(k,t-1)}}{\partial b_{p}^{(k)}}
\end{equation}
in \eqref{hessian_weights_rnn}, respectively.
\end{proof}

\par
Finally, notice that to compute Hessian we once again need to perform backward propagation through time for both vector $\textbf{v}^{(k,t)}$ in \eqref{hessian_backprop4_rnn} - \eqref{hessian_backprop6_rnn} and matrix $C^{(k,t)}$ in \eqref{hessian_backprop1_rnn} - \eqref{hessian_backprop3_rnn}. 

\begin{Corollary}
Suppose that we are using piecewise continuous ReLU activation function $f(y)=\max(0,y)$ in Theorem 2. Notice that its first derivative $f'(y)=1$ if $y>0$, $f'(y)=0$ if $y<0$, and $f'(y)$ is undefined if $y=0$. Also, its second derivative  $f''(y)=0$ for $\forall y \neq 0$. Then, for $\forall y_{i}^{(k,s)} \neq 0$ Hessian of weights can be written as
\begin{eqnarray}
H_{w}^{(k,t)} &=& \sum_{s=1}^{t} \sum_{\zeta =1}^{t} \tilde{C}^{(k,t,s,\zeta)} \otimes \left( \textbf{z}^{(k-1,s)} \textbf{z}^{(k-1,\zeta )^{T}} \right) \\
H_{u}^{(k,t)} &=& \sum_{s=1}^{t} \sum_{\zeta =1}^{t} \tilde{C}^{(k,t,s,\zeta)} \otimes \left( \textbf{z}^{(k,s-1)} \textbf{z}^{(k,\zeta -1)^{T}} \right) \\
\tilde{C}^{(k,t,s,\zeta)} &=& \left( \prod_{h=s}^{t-1} U^{(k)} \text{diag} ( \textbf{f} \phantom{'}' (\textbf{y}^{(k,h)})) \right)^{T} A^{(k,t)} \left( \prod_{h=\zeta}^{t-1} U^{(k)} \text{diag} ( \textbf{f} \phantom{'}' (\textbf{y}^{(k,h)})) \right) \label{hessian_cktsz_rnn_simplified} 
\end{eqnarray}
where 
\begin{eqnarray}
A^{(l,t)}     &=& I \circ F^{(l,t)} \\ 
A^{(k-1,t)} &=& \left( W^{(k)^{T}} A^{(k,t)} W^{(k)} + U^{(k-1)^{T}} A^{(k-1,t+1)} U^{(k-1)} \right) \circ F^{(k-1,t)} 
\end{eqnarray}
with binary matrix $F^{(k,t)} = \textbf{f} \phantom{'}'(\textbf{y}^{(k,t)}) \textbf{f} \phantom{'}'(\textbf{y}^{(k,t)})^{T}$ for $k=l,...,2$ and $t=\tau,...,a$, where we consider the terms for time step $t+1>\tau$ to be zero. 
\end{Corollary}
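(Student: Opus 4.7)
The plan is to specialize Theorem 2 using two ReLU facts under the standing hypothesis $y_i^{(k,s)} \neq 0$: (i) $f''(y) = 0$ whenever $y \neq 0$, and (ii) $f'(y) \in \{0, 1\}$ whenever $y \neq 0$. Fact (i) will annihilate every second-derivative term feeding the diagonal correction $D^{(k,t)}$, while fact (ii) will let one copy of $F^{(l,t)}$ be absorbed at the top of the backward recursion. No new calculation is required beyond those in Theorem 2; the proof is a direct specialization plus a short zeroing argument.

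First, I would show by downward induction on $k$ (for each fixed $t$, from $t=\tau$ downward as well) that $\textbf{v}^{(k,t)} = \textbf{0}$ for every admissible pair. The base case is \eqref{hessian_backprop4_rnn}: since $\textbf{v}^{(l,t)} = -(\textbf{z}^{(*,t)}-\textbf{z}^{(l,t)}) \circ \textbf{f}\phantom{'}''(\textbf{y}^{(l,t)})$ and each component of $\textbf{f}\phantom{'}''(\textbf{y}^{(l,t)})$ vanishes by (i), we have $\textbf{v}^{(l,t)}=\textbf{0}$. The inductive step uses \eqref{hessian_backprop6_rnn}; regardless of what the inner linear combination $W^{(k)^{T}}\textbf{v}^{(k,t)} + U^{(k-1)^{T}}\textbf{v}^{(k-1,t+1)}$ equals, the outer Hadamard factor $\textbf{f}\phantom{'}''(\textbf{y}^{(k-1,t)})$ is zero, so $\textbf{v}^{(k-1,t)} = \textbf{0}$. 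Consequently $D^{(k,t)} = \text{diag}(\textbf{v}^{(k,t)}) = 0$ in \eqref{hessian_aks_rnn}, whence
\begin{equation*}
A^{(k,t)} = C^{(k,t)} \circ F^{(k,t)}.
\end{equation*}

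Next, I would substitute this identity into the $C$-recurrence \eqref{hessian_backprop3_rnn} and take a Hadamard product with $F^{(k-1,t)}$ on both sides: the left-hand side becomes $C^{(k-1,t)} \circ F^{(k-1,t)} = A^{(k-1,t)}$, while the right-hand side yields exactly the claimed recurrence for $A^{(k-1,t)}$. For the base case $A^{(l,t)}$, I use $C^{(l,t)} = I \circ F^{(l,t)}$ from \eqref{hessian_backprop1_rnn}: then $A^{(l,t)} = (I \circ F^{(l,t)}) \circ F^{(l,t)}$ is diagonal with entries $f'(y_i^{(l,t)})^{4}$, and by (ii) each such entry lies in $\{0,1\}$ and therefore equals $f'(y_i^{(l,t)})^{2}$, giving $A^{(l,t)} = I \circ F^{(l,t)}$ as claimed. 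The expression \eqref{hessian_cktsz_rnn} for $\tilde{C}^{(k,t,s,\zeta)}$ and the Kronecker-product formulas for $H_w^{(k,t)}$ and $H_u^{(k,t)}$ are then immediate consequences of Theorem 2 with the simplified $A^{(k,t)}$.

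The only genuinely delicate point I expect is the double induction that forces $\textbf{v}^{(k,t)}=\textbf{0}$: the recursion mixes the layer index $k$ (going downward from $l$) with the time index $t$ (going downward from $\tau$), together with the convention that terms at $t+1>\tau$ are zero. I need to traverse this carefully so that no $\textbf{v}^{(k-1,t+1)}$ is invoked before it has been shown to vanish; the cleanest ordering is to fix $t$ at $\tau$ first and sweep $k$ downward, then decrement $t$ and repeat. Once that bookkeeping is in place, the remainder is mechanical, driven entirely by $f'' \equiv 0$ off the origin and the binary-idempotence of $f'$.
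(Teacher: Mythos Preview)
Your proposal is correct and is exactly the intended specialization of Theorem~2: the paper states this corollary without proof, and the argument you give---$f''\equiv 0$ forces every $\textbf{v}^{(k,t)}=\textbf{0}$ hence $D^{(k,t)}=0$, so $A^{(k,t)}=C^{(k,t)}\circ F^{(k,t)}$, and the binary idempotence $f'(y)^{4}=f'(y)^{2}$ collapses the top-level base case---is the natural derivation. One small remark: the ``delicate'' double induction you flag is not actually needed, since (as you yourself note in the inductive step) the outer Hadamard factor $\textbf{f}\phantom{'}''(\textbf{y}^{(k-1,t)})=\textbf{0}$ annihilates $\textbf{v}^{(k-1,t)}$ regardless of the values of $\textbf{v}^{(k,t)}$ and $\textbf{v}^{(k-1,t+1)}$, so every $\textbf{v}^{(k,t)}$ vanishes immediately and independently.
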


\par
Suppose that the eigenvalues of $n \times n$ Hermitian matrix have been ordered so that $\lambda_{1} \le ... \le \lambda_{n}$. Then, the eigenvalues of a sum of two Hermitian matrices satisfy
\begin{equation}
\lambda_{q}(A)+\lambda_{1}(B) \le \lambda_{q}(A + B) \le \lambda_{q}(A)+\lambda_{n}(B)
\label{SumOfEigs}
\end{equation}
see Weyl Theorem 4.3.1 in \cite{Horn1999}. Also, suppose that singular values of $n \times n$ nonsymmetric matrix have been ordered so that $\sigma_{1} \le ... \le \sigma_{n}$. Then the singular values of a sum of two nonsymmetric matrices satisfy
\begin{equation}
\sigma_{q}(A)-\sigma_{n}(B) \le \sigma_{q}(A + B) \le \sigma_{q}(A)+\sigma_{n}(B)
\label{SumOfSings}
\end{equation}
see Theorem 3.3.16 in \cite{Horn2008}, where a reverse ordering of singular values is used. 
\par
Then, the Hessian matrix eigenvalues satisfy 
\begin{equation}
    \lambda_{q}(B^{(k,t)}) + \sum_{s=1}^{t} \sum_{\zeta =1,\zeta \ne t}^{s} \lambda_{1}(B^{(k,t,s,\zeta)})
\le \lambda_{q}(H_{w}^{(k,t)}) 
\le \lambda_{q}(B^{(k,t)}) + \sum_{s=1}^{t} \sum_{\zeta =1,\zeta \ne t}^{s} \lambda_{n}(B^{(k,t,s,\zeta)})
\label{hessian_eig_bound_rnn}
\end{equation}
where auxiliary symmetric matrix
\begin{equation}
B^{(k,t,s,\zeta)} = \left\{
\begin{array}{l}
\tilde{C}^{(k,t,s,\zeta)} \otimes \left( \textbf{z}^{(k-1,s)} \textbf{z}^{(k-1,\zeta)^{T}} \right) \text{ if } s=\zeta \\
\tilde{C}^{(k,t,s,\zeta)} \otimes \left( \textbf{z}^{(k-1,s)} \textbf{z}^{(k-1,\zeta)^{T}} \right) + \tilde{C}^{(k,t,\zeta,s)} \otimes \left( \textbf{z}^{(k-1,\zeta)} \textbf{z}^{(k-1,s)^{T}} \right) \text{ otherwise }
\end{array}
\right.
\end{equation}
and we have abbreviated $B^{(k,t)}$ the term with $s=\zeta=t$. 
\par
Notice that when $s=\zeta=t$ the products with $U^{(k)}$ disappear in \eqref{hessian_cktsz_rnn} and \eqref{hessian_cktsz_rnn_simplified} and the term $B^{(k,t)} \equiv H_{w}^{(k)}$  in \eqref{hessian_hwk} and \eqref{hessian_hwk_simplified}, with all quantities taken at time $t$. Therefore, using \eqref{hessian_eigenvalue_set} we may conclude that
\begin{equation}
\lambda_{q}(B^{(k,t)}) = \left\{ 0, || \textbf{z}^{(k-1,t)} ||_{2}^{2} \lambda_{i}(A^{(k,t)}) \right\} 
\label{hessian_eigenvalue_set_rnn}
\end{equation} 
with $\lambda_{q}(B^{(k,t)})=0$ eigenvalue multiplicity being $(n-1)m$ and $i=1,...m$. Notice that similar bounds can be obtained analogously for all terms $B^{(k,t,s,\zeta)}$ with $s = \zeta$. 
\par
Also, notice that for symmetric matrix $|\lambda_{q}| = \sigma_{q}$ and that the singular values of Kronecker product of two square $m \times m$ and $n \times n$ matrices are
\begin{equation}
\sigma_{q}(A \otimes B) = \sigma_{i}(A)\sigma_{j}(B)
\label{KroneckerSings}
\end{equation}
for $i=1,...,m$, $j=1,...,n$ and $q=1,...,mn$, see Theorem 4.2.15 in \cite{Horn2008}. Therefore, using \eqref{SumOfSings} and \eqref{KroneckerSings} for the terms with $s \ne \zeta$ we obtain 
\begin{equation}
0 \le \frac{|\lambda_{q}(B^{(k,t,s,\zeta)})|}{||\textbf{z}^{(k-1,\zeta)}||_{2}||\textbf{z}^{(k-1,s)}||_{2}} \le \sigma_{q}(\tilde{C}^{(k,t,s,\zeta)})+\sigma_{n}(\tilde{C}^{(k,t,\zeta,s)})
\label{hessian_nonsym_terms_eig_bound_rnn}
\end{equation}
where we have used the fact that $\sigma_{i}(A)=\sigma_{i}(A^{T})$ and $\sigma_{n}(\textbf{v}\textbf{w}^{T}) = \sigma_{n}(\textbf{w}\textbf{v}^{T})=||\textbf{v}||_{2}||\textbf{w}||_{2}$.
\par
Finally, using \eqref{KroneckerEigs} and \eqref{hessian_eig_bound_rnn} as well as \eqref{KroneckerSings} and \eqref{hessian_nonsym_terms_eig_bound_rnn} we obtain that the Hessian matrix eigenvalues satisfy the following bounds
\begin{align}
  (t-1) \min_{s \ne t} &\phantom{.} \mu_{1}^{(k,t,s)} - t(t-1) \max_{s \ne \zeta} \sigma_{n}(\tilde{C}^{(k,t,s,\zeta)}) ||\textbf{z}^{(k-1,\zeta)}||_{2}||\textbf{z}^{(k-1,s)}||_{2}  \nonumber \\
&\le \lambda_{q}(H_{w}^{(k,t)}) - \lambda_{q}(B^{(k,t)}) \le \\
&\phantom{\le 1111111} 
  (t-1) \max_{s \ne t} \mu_{n}^{(k,t,s)} + t(t-1) \max_{s \ne \zeta} \sigma_{n}(\tilde{C}^{(k,t,s,\zeta)}) ||\textbf{z}^{(k-1,\zeta)}||_{2}||\textbf{z}^{(k-1,s)}||_{2} \nonumber
\end{align}
where $\mu_{1}^{(k,t,s)} = \min \{0, ||\textbf{z}^{(k-1,s)}||_{2}^{2} \lambda_{1}( \tilde{C}^{(k,t,s,s)}) \}$, $\mu_{n}^{(k,t,s)} = \max \{0, ||\textbf{z}^{(k-1,s)}||_{2}^{2} \lambda_{n}(\tilde{C}^{(k,t,s,s)}) \}$. 

\addtolength{\textheight}{-0.5cm} 
\clearpage
\newpage

\section{Conclusion} 

In this paper we have reviewed backward propagation, including backward propagation through time. We have shown that the weight gradient can be expressed as a rank-$1$ and rank-$t$ matrix for FNNs and RNNs respectively. Therefore, we have concluded that for mini-batch of size $r$ the weight updates based on the gradient can be expressed as rank-$rt$ matrix. Also, we have shown that for $t$ time steps the weight Hessian can be expressed as a sum of $t^{2}$ Kronecker products of rank-$1$ and $W^{T}AW$ matrices, for some matrix $A$ and weight matrix $W$. Finally, we have found an expression and bounds for the eigenvalues of the Hessian matrix in terms of smaller $m \times m$ matrices. In the future we would like to explore gradient and Hessian structure to develop novel optimization algorithms for computing weight and bias updates. 

\section{Acknowledgements}
The author would like to acknowledge Andrei Bourchtein, Ludmila Bourchtein, Thomas Breuel, Boris Ginsburg, Michael Garland, Oleksii Kuchaiev, Pavlo Molchanov and Saurav Muralidharan for their useful comments and suggestions.

\end{document}